\newcommand\reallywidehat[1]{%
\savestack{\tmpbox}{\stretchto{%
  \scaleto{%
    \scalerel*[\widthof{\ensuremath{#1}}]{\kern-.6pt\bigwedge\kern-.6pt}%
    {\rule[-\textheight/2]{1ex}{\textheight}}
  }{\textheight}%
}{0.5ex}}%
\stackon[1pt]{#1}{\tmpbox}%
}
\newcommand{\bE}{\mathbb{E}}
\newcommand{\Var}{\text{Var}}
\newtheorem{problem}{Problem}
\newenvironment{fminipage}%
  {\begin{Sbox}\begin{minipage}}%
  {\end{minipage}\end{Sbox}\fbox{\TheSbox}}
\newcommand{\BlackBox}{\rule{1.5ex}{1.5ex}}  
    \renewenvironment{proof}{\par\noindent{\bf Proof\ }}{\hfill\BlackBox\\[2mm]}
    \newenvironment{proof}{\par\noindent{\bf Proof\ }}{\hfill\BlackBox\\[2mm]}
\newtheorem{theorem}{Theorem}
\newtheorem{lemma}[theorem]{Lemma} 
\newtheorem{remark}[theorem]{Remark}
\newtheorem{corollary}[theorem]{Corollary}
\newtheorem{definition}[theorem]{Definition}
\newcommand*{\rom}[1]{\expandafter\@slowromancap\romannumeral #1@}
\newcommand{\N}{\mathbb{N}}
\newcommand{\p}[1]{\left(#1\right)}
\newcommand{\pp}[1]{\left[#1\right]}
\newcommand{\ppp}[1]{\left\{#1\right\}}
\newcommand{\Max}{\max\limits}
\newcommand{\f}[1]{\boldsymbol{#1}}
\newcommand{\EE}{\mathds{E}}
\DeclareMathOperator*{\argmax}{arg\,max}
\tikzstyle{block} = [draw, fill=blue!20, rectangle, 
\tikzstyle{sum} = [draw, fill=blue!20, circle, node distance=1cm]
\tikzstyle{input} = [coordinate]
\tikzstyle{output} = [coordinate]
\tikzstyle{pinstyle} = [pin edge={to-,thin,black}]
\definecolor{bl}{HTML}{aecae8}
\definecolor{bl2}{HTML}{7aa7c2}
\begin{document}

\title{Mathematical Framework for Online Social Media Auditing}

\author{Wasim~Huleihel\thanks{W. Huleihel is with the Department of Electrical Engineering-Systems at Tel Aviv university, {T}el {A}viv 6997801, Israel (e-mail:  \texttt{wasimh@tauex.tau.ac.il}).}~~~~~~Yehonathan~Refael\thanks{Y. Refael is with the Department of Electrical Engineering-Systems at Tel Aviv university, {T}el {A}viv 6997801, Israel (e-mail:  \texttt{refaelkalim@mail.tau.ac.il}).}}

\maketitle
\begin{abstract}
Social media platforms (SMPs) leverage algorithmic filtering (AF) as a means of selecting the content that constitutes a user's feed with the aim of maximizing their rewards. Selectively choosing the contents to be shown on the user's feed may yield a certain extent of influence, either minor or major, on the user's decision-making, compared to what it would have been under a natural/fair content selection. As we have witnessed over the past decade, algorithmic filtering can cause detrimental side effects, ranging from biasing individual decisions to shaping those of society as a whole, for example, diverting users' attention from whether to get the COVID-19 vaccine or inducing the public to choose a presidential candidate. The government's constant attempts to regulate the adverse effects of AF are often complicated, due to bureaucracy, legal affairs, and financial considerations. On the other hand SMPs seek to monitor their own algorithmic activities to avoid being fined for exceeding the allowable threshold. In this paper, we mathematically formalize this framework and utilize it to construct a data-driven statistical auditing procedure to regulate AF from deflecting users' beliefs over time, along with sample complexity guarantees. This state-of-the-art algorithm can be used either by authorities acting as external regulators or by SMPs for self-auditing. 
\end{abstract}

\allowdisplaybreaks

\section{Introduction}

Social media platforms (SMPs), e.g., Google, Facebook, and Twitter, are increasingly becoming the prevailing, most easily accessible, and most popular platforms for individual media consumption across the Western world \cite{mitchell2016modern}. Indeed, media platforms act as intermediaries between users and the wealth of information collected from their friends, news, opinion leaders, celebrities, politicians and advertisers. So pervasive and eclectic is the stream of information content collected for each user at any given time that it compels social networks to filter out all but the most relevant information, display it in the user's news feed, and its order of appearance. To that end, in the last decade, social platforms have been adopted various \emph{algorithmic filtering} (AF) methods \cite{Robyn2018AlgorithmicFiltering} to select and sort collections of contents to be shown on their user’s feed.

Notwithstanding the potential of AF to provide users with a richer, more diverse, and more engaging experience, over the past two decades, these methods have been abused by social network platforms to selectively filter user feeds in an effort to maximize their returns (including revenue, user accumulation, popularity gain, etc.). This phenomenon has brought about harmful side effects \cite{devito2017algorithms,bozdag2013bias}. For example, an artificial comment ranking that encourages over-representation of one side's opinion or polarization of opinions \cite{siersdorfer2014analyzing}, has sow hatred between groups \cite{lee2016impact}). Similarly, the prioritization of a specific topic contributes to the dissemination of deliberately disregarded fake news \cite{marwick2017media, chesney2019deep, Collins2019Elliott, Pariser2011}.  Disseminating fake news may sway the presidential election results \cite{Blake2018}. Advertisements that promote products based on erroneous claims regarding the user's interests  \cite{speicher2018potential, sweeney2013discrimination}, leading to some information being more (or less) visible along with many others. Intensive dietary recommendations may cause users to change their own diet \cite{chau2018use,jane2018social}, etc.

The foregoing examples, among many others, embody the potentially damaging fact that subjectively filtering the content to be shown on the user's feed might not overlap with the individual user's or the society's good as a whole, resulting in widespread adverse impacts on both individuals and society \cite{Pariser2011}. This, in turn, heavily impacts users' learning, shapes their thinking and decisions, and ultimately influences how they behave as individuals or as a whole society.  

These negative influences have led to a number of calls for regulatory action by the authorities; however, their increasing enforcement attempts encounter multiple hurdles, such as, legal barriers, cumbersome and entangled bureaucracy, high human resource costs, which usually ends with no concrete results \cite{brannon2019free,klonick2017new,berghel2017lies}.
The legal difficulties are mainly driven by the concern that regulations might limit free speech \cite{klonick2017new,brannon2019free}, infringe on privacy by requiring content disclosure, subjectively define of what is right or wrong media behaviour \cite{obar2015social}, undermine innovation or suppress jobs and revenues (e.g. through advertising restrictions).

Meanwhile, the increasing enforcement of regulations that aims at fining violations encourages the platforms to use self-regulatory methods to prevent unintentional internal activities and avoid penalties \cite{medzini2021enhanced}. Among others, Twitter suspends tens of thousands accounts suspected of being involved in promoting conspiracy theories \cite{Twitter2021self_regulation}. Facebook has set up an independent internal team named  ``Oversight Board" to foster freedom of expression by making principled, independent decisions about contents \cite{OversightBoard2021}; YouTube has removed videos urging violence \cite{YouTube2021self_regulation}.

The suggestion of the notion of an implicit agreement between users and social media platforms is far from being new \cite{manning1989implicit}. This notion draws from a general implicit contract theory \cite{koszegi2014behavioral}, which economists use to explain behaviors that are observed but not justified by competitive market theory. In particular, it has been invoked to explain the reason for users to keep using social media despite data privacy infractions \cite{doi:10.1080/15213269.2019.1598434,sarikakis2017social,agendaworld}. It has also been advanced as a starting point for regulation \cite{quinn2016we}, since it balances the interests of both parties.

As social media become increasingly popular information sources, a fundamental question remains: \emph{Is there a systematic and responsible way to regulate the effect of social media platforms on users learning and decision-making?} Even though it may be possible to do so, due to the many issues raised above, and many other related ones, designing and reinforcing a regulation is still a notoriously difficult open problem \cite{kurbalija2016introduction}. Accordingly, the challenging quest for currently a far reaching fundamental theory for systematic regulatory procedures that satisfy several social, legal, financial, and user related requirements, and its prospective practical ramifications, constitute the main impetus behind this paper. Motivated to guarantee compliance with a consumer-provider agreement, in this paper, we propose a data-driven statistical auditing procedure to regulate AF, which monitors adverse influences on the user learning (and thus on decision-making), while allowing real-time enforcement. 

\subsection{Related Work}
Various attempts aim at regulating content moderation have been proposed over the last few years; however, all of these attempts generally focus on monitoring specific violations of the social platform-user agreement. Specifically, common methods for content moderation fall broadly into one of three categories (see, e.g., \cite{Campbell19,Sina19}): 
\begin{enumerate}
\setlength\itemsep{-0.2em}
    \item \textit{Content control}, which aims at tagging or removing suspicious items. However, the ability of AI algorithms to identify such rough items grows more slowly than the ability to create them \cite{Paschen19}, and objectivity of human content control is often less trusted \cite{Anderson17}. Content control strategies include: drawing a line in the sand (e.g., determining whether discrimination has occurred by thresholding the difference between two proportions \cite{chouldechova2017fair}); detecting hate speech (e.g., using deep learning technique \cite{DBLP:journals/corr/abs-2106-00742,8669073} or NLP clustering methods \cite{davidson2017automated}); or finding the origin of the content (e.g., reducing fake news by whitelisting news sources \cite{berghel2017lies} or detecting the sources that generate misleading posts \cite{racz2020rumor}).
    \item \textit{Transparency}, where users are required to provide lawful identification. This approach imposes a serious toll on user privacy and anonymity, while not even necessarily stopping unintended spread of misinformation. 
    \item \textit{Punishment}, where the network provider or the state impose penalties for malicious spreading of fake information. This extreme approach is clearly the least desirable from both privacy and human-rights perspectives. 
\end{enumerate}

Most related to our work are \cite{cen2020regulating} and \cite{cen2023userdriven}. Specifically, in \cite{cen2020regulating} the concept of ``counterfactual regulations" was proposed and analyzed. Counterfactual regulations deal with regulatory statements of the form: ``The platform should produce similar feeds for given users who are identical except for one single property". The users differentiating property could be, for example, gender, religion, left or right wing affiliation, age, among many others. Accordingly, \cite{cen2020regulating} proposed an auditing procedure to test whether a counterfactual regulation statement is met or not, under a certain i.i.d. observational model. More recently, \cite{cen2023userdriven} introduced the notion of baseline/reference feed as ``the content that a user would see without filtering".\footnote{The idea of a baseline feed was originally proposed in an old version of \cite{cen2020regulating}, which can be found in \cite{cen20OLD}.} Then, they studied the problem of regulating AF with respect to this baseline, and proposed a framework and a procedure for regulating and auditing SMPs with respect to such a baseline. As we explain below in detail, our paper follows some of the general ideas in \cite{cen20OLD} and \cite{cen2020regulating}, but deviates in the way the setting is formulated and analyzed.

Finally, research and modeling of counterfactual regulation draw parallel ideas from the differential privacy literature \cite{dwork2006calibrating,dwork2014algorithmic}, as in the case of comparing outcomes under different interventions  \cite{wasserman2010statistical}. While our paper addresses questions similar to those studied in social learning and opinion dynamics, e.g. \cite{acemoglu2011bayesian,molavi2018theory,banerjee1992simple}, it is distinct from this literature in the sense that our research focuses on the question of how the flow of information, mediated by social networks, leads to undesirable biases in the way users learn and, consequently, to a detrimental change in their decision-making and ultimately in their actions. Furthermore, this is accomplished without the need to actually access the users' beliefs, actions, or thoughts.

\subsection{Main Contributions}

Our main goal is to develop an auditing procedure for content moderation over social networks. We split this subsection into two parts: the first focuses on our conceptual contributions to the general area of social media regulation, while the second discusses our technical contributions.
\subsubsection{Conceptual Contributions}
\label{sec:Conceptual_Contributions}

\paragraph{Unifying framework.} Following the lead of \cite{cen20OLD} and \cite{cen2020regulating}, we formulate a statistical unifying framework for online platform auditing. This framework considers the three involved parties: platform, users, and an auditor, all interacting and evolving over time (see, Figure~\ref{fig:1}). At each time point, the platform shows its users collections of content, known as ``filtered feeds." As each user in the platform browses through his own feed, he implicitly forms a belief, and ultimately modifies his actions. The auditor's meta-objective is to moderate the effect of socially irresponsible externalities caused by the AF's effect on user learning and decision-making, either as individuals or as a society. To that end, the platform supplies the auditor with anonymous data of two types: filtered and reference. The latter is constructed by ignoring any aspect of a platform's fiscal motivation, thus representing a natural/fair filtering of content rather than a subjective form of filtering (see, Section~\ref{section-Framework}, for a precise definition), prioritizing the users' experience. We show that the auditor's task can be formulated as a certain closeness testing problem (see, e.g., \cite{daskalakis2018distribution,canonne2021price}). In addition to the filtered vs. reference approach above, similarly to \cite{cen2020regulating}, we also study counterfactual regulations.

\paragraph{Automatic online auditing procedure.}
We propose an auditing procedure that does not require any prior explicit regulation statement. The auditing procedure monitors any damaging influence on the users' decision-making over a predefined adjustable time-frame, compared to what it would have been without subjective filtering of the users' feeds, namely, under a natural/fair content filtering. This is accomplished by formulating a measure called ``belief-variability", which estimates the influence of the AF on the beliefs of all the users. Using this variability, we then formulate the auditor's objective as a sequential hypothesis testing problem. As a binary hypothesis tester, the auditor examines whether the platform exceeds a tunable threshold of acceptable values of this estimated measurement of influence, doing so over a predefined time frame with a given confidence level. The auditor outputs whether or not regulation is being complied with, meaning whether public opinion is being biased or not. For example, this auditing procedure could easily detect the intensive promotion of a presidential candidate via posts, advertisements, the prioritization of related user comments, artificial adversarial users, or polarized recommendations. Finally, we propose an auditing procedure for deciding whether a platform complies with a given counterfactual regulation statement over the course of time.

We next highlight the main differences and contributions compared to \cite{cen2020regulating} and \cite{cen2023userdriven}. Specifically, both of these papers follow a ``worst-case" approach, where auditing is designed to prevent violations associated with (a hypothetical) ``most gullible" user, i.e., the user whose decisions are most influenced by AF. The idea is that if this user passes regulation, then all other users will pass regulation as well. We instead propose a ``global" approach, where we average the influence of the platform's AF over a set of users. It should be clear that each approach has its own advantages and disadvantages. For example, the worst-case approach might be sensitive to adversarial users; in real-world SMPs, where any party is free to create a user without any supervision, a set of adversarial users can act as more naive/gullible than the most gullible user already in existence, and thus fool the auditor. Also, since the most gullible user is model driven (and not chosen from the data) then he/she might be unrealistically ``too gullible", and then the auditor will announce false alarm violations excessively. Finally, the worst-case approach prevents all users from gradually changing their opinions. This is because, under this approach, the auditing process will immediately result in a violation when the most gullible user alters its opinion slightly. As a result, all other users will not have the opportunity to make slow and natural changes to their opinions, as they would with our average approach. In some sense, the above problematic issues are less severe/relevant in our average approach. It should be emphasized, however, that the outcome of any approach would depend on how seriously the platform engages in conversations on designing the test, model family, and reference feeds. 

Another difference that we would like to emphasize is that the probabilistic setting considered in our paper is different from the one in \cite{cen2020regulating} and \cite{cen2023userdriven}. Specifically, in those papers, an i.i.d. time-independent generative model was assumed for the filtered (and reference) feeds. This implies that feeds are statistically independent, and excludes violations of regulation over time. In ``real-world" cases, this approach may be inherently challenging. Indeed, in cases where regulations must be enforced over time, the procedures in \cite{cen2020regulating} and \cite{cen2023userdriven} must be repeated endlessly. Furthermore, this allows an ``uncooperative" platform to comply with regulation at a specific time when being tested, but not at any other time. In our paper, on the other hand, as an initial attempt and approximation to resolve the above issues, we follow a more complicated time-dependent Markovian model. As so, our auditing procedures, analysis, and results are inherently different from those in the aforementioned papers. 

\subsubsection{Technical Contributions} \label{Contribution}

In addition to formulating a mathematical model for social media auditing, our paper contributes to the study of the \emph{closeness testing problem}. The closeness testing problem have been extensively studied in the past few years (see, e.g., \cite{daskalakis2018distribution,batu2013testing,chan2014optimal,acharya2015optimal}), as well as its extended version, the tolerant closeness testing problem (e.g., \cite{daskalakis2018distribution,canonne2021price}). The vanilla form of the later is as follows. We are given i.i.d. sample access to distributions $P$ and $Q$ over $[n]$, and bounds $\varepsilon_{2}>\varepsilon_{1} \geq 0$, and $\delta>0$. The task is to distinguish with probability of at least $1-\delta$ between $\|P-Q\|_{1} \leq \varepsilon_{1}$ and $\|P-Q\|_{1} \geq \varepsilon_{2}$, whenever $P, Q$ satisfy one of these two inequalities. In our setting, samples (or, feeds) are assumed to be generated from a certain Markovian probabilistic model (rather than being i.i.d.). Testing Markov chains is a new and active area of research with a number of remarkable recent results, such as testing symmetric Markov chains \cite{daskalakis2018testing}, testing Ergodic Markov chains \cite{wolfer2019minimax,wolfer2020minimax} or testing irreducible Markov chains \cite{pmlr-v132-chan21a}. In this paper, we construct a method to solve a generalized form of the two problems above. Specifically, rather than a single pair of distributions, we are given samples from multiple pairs (see, \cite{ReutDana} for a related testing problem) of hidden irreducible Markov chains, and we need to decide whether the total sum of distances between these hidden pairs of chains is $\varepsilon_{1}$-close, or $\varepsilon_{2}$-far away. Similarly to majority of the papers mentioned above, we focus on the case where probabilistic distance measure is $\ell_{\infty}$, with the understanding that other metrics can be analyzed. We propose a testing algorithm to the problem above, along with sample and complexity guarantees. It turns out that a major part of the analysis of our algorithm is related to the study of the covering time of random walks on undirected graphs \cite{pmlr-v132-chan21a}. Specifically, we obtain an upper bound on the time it takes for multiple parallel random walks to cover each state a given number of times. Our analysis might be of independent interest. 

\subsection{Notations}
For a positive integer $m$, we denote $[m] \equiv\{1,2, \ldots, m\}$. The underlying space in the paper is $\mathbb{R}^{n}$, i.e., the space of all real-valued $n$ length column vectors endowed with the dot product $\langle\mathbf{x}, \mathbf{y}\rangle=\mathbf{x}^{T} \mathbf{y}$. For $p \geq 1$, The $\ell_{p}$-norm of a vector $\mathbf{x} \in \mathbb{R}^{n}$ is given by $\|\mathbf{x}\|_{p} \equiv \sqrt[p]{\sum_{i=1}\left|x_{i}\right|^{p}}$. The $\ell_{\infty}$-norm of a vector $\mathbf{x} \in \mathbb{R}^{n}$ is $\|\mathbf{x}\|_{\infty}=\max _{i=1,2, \ldots, n}\left|x_{i}\right|$. The $p$-norm of matrix $\bA$ induced by vector $p$-norms is defined by $\|\bA\|_{p}=\sup _{x \neq 0} \frac{\|\bA x\|_{p}}{\|x\|_{p}}$. In the special cases of $p=1, \infty$, the induced matrix norms can be computed or estimated by $\|\bA\|_{1}=\max _{1 \leq j \leq n} \sum_{i=1}^{m}\left|a_{i j}\right|$,
which is simply the maximum absolute column sum of the matrix; $\|\bA\|_{\infty}=\max _{1 \leq i \leq m}\sum_{j=1}^{n}\left|a_{i j}\right|$, which is simply the maximum absolute row sum of the matrix.
$\mathbf{e}$ is used to denote the vector of all ones and $\mathbf{0}$ is the vector of all zeros. We denote by $\left|S\right|$ the number of element in the set $S$. The function $a:\left(\N\times\N\right)\rightarrow\{\{\N\},\{\N\}\}$ takes a pair of elements and return set containing those two element.
The function $a_f:\left(\N\times\N\right)\rightarrow\{\N\}$ takes a pair of elements and return the first element in the pair (first coordinate). Similarly, the function $a_s:\left(\N\times\N\right)\rightarrow\{\N\}$ takes a pair of elements and return the second element in the pair (first coordinate). Finally, we let $\Delta^{n}$ be the $n$-dimensional probability simplex.

\section{Framework: Setup and Goal}
\label{section-Framework}

In this section, we formalize mathematically our framework, including the setup and goals. Here, we opted to keep the exposition simple and concise, by presenting only the essential ingredients of our model which are needed for our main results. However, we refer the interested reader to the appendix, where we include a detailed and consistent construction of our framework, with deeper discussions and motivations for our definitions and assumptions. 

\subsection{The setup}\label{subsec:setup}

Consider a system with the following three parties: a SMP, a \emph{user}, and an \emph{auditor}, as illustrated in Figure~\ref{fig:1}. At each time step $t\in\mathbb{N}$, the platform shows each user a collection of contents (e.g., posts, videos, photos, ads, etc.) known as \emph{filtered feeds}. We denote the filtered feed shown to user $u\in [\s{U}]$ at time $t\in\mathbb{N}$ by $\bX^{\s{F}}_{u}(t)$, and assume that it consists of $\s{M}\in\mathbb{N}$ pieces of contents, namely, $\bX^{\s{F}}_{u}(t)=\{\f{x}^{\s{F}}_{1,u}(t),\ldots,\f{x}^{\s{F}}_{\s{M},u}(t)\}$, where $\f{x}^{\s{F}}_{j,u}(t)\in\calX$ denotes a piece of content, for $1\leq j\leq\s{M}$. 

Generally speaking, the AF mechanism is not known and should not be disclosed to the auditor. Nonetheless, it should be clear that for the auditor to be able to inspect the SMP, something about the feeds generation process must be assumed. From the auditor's point of view, the platform is a sequential feeds generating system, relying on a probabilistic relationship of the current feed conditioned on the previous feeds. Specifically, in this paper, we assume that the feeds are generated at random according to a quasi-Markov homogeneous model; we divide the time horizon into batches, and assume that in each batch, the platform's AF process is modeled as a large probabilistic state machine. One can think of these batches as time interval where the platform collects new data to create new successive feeds. 

Mathematically, let $\s{T}_{\s{total}}\in\mathbb{N}$ denote the time horizon, which determines how far into the past the auditor scrutinizes the platform's behavior. Assume we have $\s{B}\in\mathbb{N}$ batches each of length $\s{T}\in\mathbb{N}$, such that in batch $b\in[\s{B}]$ we have a time sampling sequence $b\cdot\s{T}<t_{0,b}<t_{1,b}<\dots<t_{\s{T},b}\leq (b+1)\cdot\s{T}$. In each batch, from the auditor's point of view, the piece of content $\f{x}_{\ell,u}^{\s{F}}(t_{i,b})$, at time $t_{i,b}$, for $\ell\in[\s{M}]$, is drawn from a first-order irreducible Markov chain, namely, $ \pr(\f{x}^{\s{F}}_{\ell,u}(t_{i,b})\vert\f{x}^{\s{F}}_{\ell,u}(t_{0,b}),\ldots,\f{x}^{\s{F}}_{\ell,u}(t_{i-1,b})) = \pr(\f{x}^{\s{F}}_{\ell,u}(t_{i,b})\vert\f{x}^{\s{F}}_{\ell,u}(t_{i-1,b}))$, and $\pr(\f{x}^{\s{F}}_{\ell,u}(t_{i,b})=s_2\vert\f{x}^{\s{F}}_{\ell,u}(t_{i-1,b})=s_1)\triangleq Q_{u,b}(s_1,s_2)$, for any two possible states $s_1,s_2\in\calX$. We denote the transition probability matrix by in batch $b\in[\s{B}]$ by $\s{Q}^{\f{F}}_{u,b}=\left[Q_{u,b}(s_1,s_2)\right]_{s_1,s_2 \in\calX}$. We assume further that the $\s{M}$ Markov trajectories are i.i.d. Note that over different intervals, indexed by $b$, the filtering process could be transformed into a new state machine subjected to a different transition probabilities. For example, this transformation may occur over time when new external data incur noticeable changes in the platform's reward. Thus, in the $b$th batch, the observed feeds are,
$$\underset{\text{Feed 1}}{\left\{\f{x}^{\s{F}}_{l,u}(t_{0,b})\right\}_{l=1}^{\s{M}}}, \underset{\text{Feed 2}}{\left\{\f{x}^{\s{F}}_{l,u}(t_{1,b})\right\}_{l=1}^{\s{M}}},
\ldots,
\underset{\text{Feed } \s{T}}{\left\{\f{x}^{\s{F}}_{l,u}(t_{\s{T},b})\right\}_{l=1}^{\s{M}}}.$$ 
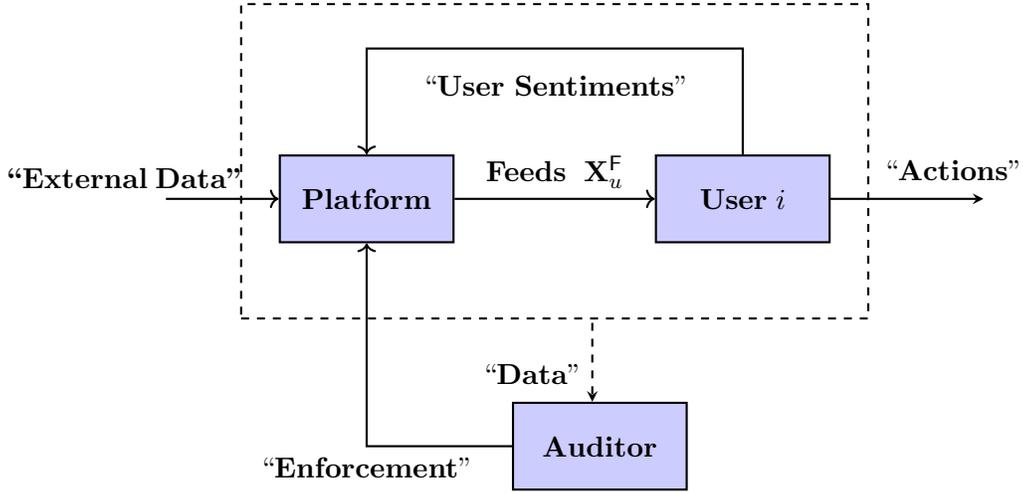
\begin{figure*}[t]
\centering\scalebox{1}{
\begin{tikzpicture}[thick,black!70!black]
\path
(0,0)       node[block] (N) {$\textbf{Platform}$}
++(0:5) node[block] (C) {$\textbf{User}\;i$}
++(-120:3.8)    node[block] (O) {$\textbf{Auditor}$};

\draw[<-] (N.west)--+(180:1.5) node[above,black,xshift=-0.5cm] {$\textbf{``External}\;\textbf{Data"}\;$};
\draw[->] (N)--(C) node[midway,above,black] {$\textbf{Feeds}\;$ $\bX_u^{\s{F}}$};
\draw[<-] (N)--++(90:2) coordinate (A)-|(C);
\path (A)--(C) node[midway,above,align=left,black,xshift=0.6cm] {``$\textbf{User}$ $\textbf{Sentiments}$"};
\draw [->] (O) -| node[midway,below,align=left,black] {``$\textbf{Enforcement}$"} (N);
\draw[thick,dashed]     ($(N.north west)+(-0.5,2)$) rectangle ($(C.south east)+(0.5,-1)$);
\draw [-stealth,thick,dashed](3,-1.65) -- (3,-2.7) node[above,black,xshift=-0.8cm,yshift=0.1cm] {``$\textbf{Data}$"};
\draw [-stealth,thick](6.15,0) -- (8.2,0) node[above,black,xshift=-0.4cm,yshift=0.1cm] {``$\textbf{Actions}$"};
\end{tikzpicture}}  
\caption{An illustration of the interaction between the platform, the user, and the auditor.}
\label{fig:1}
\end{figure*}

\paragraph{Reference feeds.} 
Following \cite{wachter2019right,ghosh2019new,cen20OLD,petty2000marketing}, we define a reference boundary that is formed based on the users consent, and its location is determined by domain experts. While user $u$'s filtered feed $\bX_{u}^{\s{F}}(t)$ at time $t$ is chosen by the platform in a certain reward-maximizing methodology, the \emph{reference feeds} $\bX_u^{\s{R}}(t)$ could have been hypothetically selected by the platform if it strictly followed the consumer-provider agreement. These reference feeds are specific to each user $u\in[\s{U}]$ and time $t$. In this scenario, the platform would construct the feed based solely on the user's interests, without any subjective preferences influencing the content selection. Essentially, the only natural situation where the platform can filter content without introducing any subjective bias into the user's decision-making process and actions is by selecting feasible content that maximizes the user's benefit/reward. This approach ensures that the user's feed reflects their own preferences, which may align with the platform's benefits at times, but not necessarily always. Mathematically, the user's exclusive benefit is quantified by a personal reward function that encompasses only the components measuring the user's benefits. Similarly to the filtered feeds, we assume a Markovian generative model for the reference feeds, and denote by $\s{P}^{\f{R}}_{u,b}\triangleq\left[P_{u,b}(s_1,s_2)\right]_{i,j \in\calX}$ the corresponding matrix transition probabilities in batch $b\in[\s{B}]$. In the appendix, we give examples of how the filtered and reference feeds are constructed by means of a certain reward function maximization, and elucidate the difference between the filtered and reference feeds. It should be emphasized that the specific reference feed construction hinted above, and described in more detail in the appendix, is just one possible example; our results and algorithms only require that there is some fixed reference feed (per user).

\paragraph{Counterfactual regulation.}

In addition to the ``filtered vs. reference" approach, we will also analyze the following alternative auditing framework. Let $\calS$ be a \emph{regulatory statement} that an inspector (or, perhaps, the platform itself) wish to test. For example, $\calS$ could be: ``\emph{The platform should produce similar feeds, in the course of a given time horizon $\s{T}$, for users who are identical except for property $\mathscr{P}$}", where $\mathscr{P}$ could be ethnicity, sexual orientation, gender, a combination of these factors, etc. Let $\calU_{\mathscr{P}}\subset[\s{U}]\times[\s{U}]$ be a subset of pairs of users that comply with $\mathscr{P}$. Then, for any pair of users $(i,j)\in\calU_{\mathscr{P}}$, the inspector's objective is to determine whether the platform's filtering algorithm cause user $i$'s and user $j$'s beliefs and actions to be significantly different. We formulate this objective rigorously in the next section. We mention here that a similar approach to the above was proposed recently in \cite{cen2020regulating}, assuming a time-independent i.i.d. model. 

\subsection{Auditor's goal}\label{subsec:regulatorgoal}

\paragraph{Average violation.} We now define the meaning of ``violation" from the auditor's perspective. Let $\calU\subseteq[\s{U}]$ be a certain subset of users. We define the \emph{total filtering-variability metric} as,
\begin{align}
    \mathds{V}_{\s{filter}} &= \frac{1}{|\calU|}\sum_{u\in\calU}\max_{i\in\calX}\s{d}_{\s{TV}}\p{P_{u,b}(i,\cdot),Q_{u,b}(i,\cdot)} \\
    &= \frac{1}{|\calU|}\sum_{u\in\calU}\max _{i\in\calX} \left\|\mathbf{P}^{\f{R}}_{u,b}(i)- \mathbf{Q}^{\f{F}}_{u,b}(i) \right\|_{1}\\
    &= \frac{1}{|\calU|}\sum_{u\in\calU}\left\|\s{P}^{\f{R}}_{u,b}-\s{Q}^{\f{F}}_{u,b}\right\|_{\infty},\label{eqn:VfilterObjective}
\end{align}
where $\mathbf{P}^{\f{R}}_{u,b}(i)\triangleq[P_{u,b}(i,j)]_{j\in\calX}$ and $\mathbf{Q}^{\f{F}}_{u,b}(i)\triangleq[Q_{u,b}(i,j)]_{j\in\calX}$. We discuss the choice of the above metric in the appendix. Without loss of generality, in the rest of this paper, we focus on the special case where $\calU = [\s{U}]$. Also, we will consider a single specific interval for testing, say, $\{t_0,t_1,\ldots,t_{\s{T}}\} = [\s{T}]$, and therefore drop the dependency of the above notations on the batch index $b$. The underlying assumption here is that $\s{T}$ is sufficiently large so as to allow for reliable testing, as dictated by our sample complexity guarantees, presented in the next section. An interesting question is to consider the case where the batch sizes are unknown, and then more sophisticated sequential/adaptive testing algorithms are needed.

\paragraph{Testing.} Following the above, from the auditor's perspective, we define a violation event as the case where 
 $\mathds{V}_{\s{filter}}$ is ``unusually large". Specifically, the audit's decision task is formulated as the following hypothesis testing problem,
  \begin{align}
   \calH_{0}:\mathds{V}_{\s{filter}}\leq\varepsilon_1\quad\s{vs.}\quad\calH_{1}:\mathds{V}_{\s{filter}}\geq\varepsilon_2,\label{eqn:HTbelief3}
  \end{align}
where $\varepsilon_2>\varepsilon_1\geq0$ govern the auditing strictness. Devising successful statistical tests which solve \eqref{eqn:HTbelief3} with high probability, guarantee that whenever the auditor decision is $\calH_0$, then the platform honors the consumer-provider agreement, since the beliefs and actions are indistinguishable under the filtered and reference feeds. Conversely, rejecting $\calH_0$ with high confidence implies that AF causes significantly different learning outcomes. Calculating $\mathds{V}_{\s{filter}}$ requires knowledge of the filtering and reference distributions; a condition rarely met in practice. Accordingly, the auditor needs to solve \eqref{eqn:HTbelief3} using only samples from these distributions; we assume that for $t\geq1$ the auditor observes the filtered and reference feeds $\{\bX^{\s{F}}_{i}(t),\bX^{\s{R}}_{i}(t)\}$, for all users $u\in[\s{U}]$, and utilize these to test for violations. In practice, it might be challenging for the auditor to have both the reference and filtered feeds at hand. As so, it is an interesting question for future research to analyze the scenario where this full information is not available, e.g., only partial and perhaps quantized/noisy observations are given. Note that this type of hypothesis testing problem is reminiscent of the well-studied \emph{tolerant closeness testing} problem (see, e.g., \cite{daskalakis2018distribution,canonne2021price}). We are now in a position to state the testing problem faced by the auditor.
\begin{problem}[Auditor testing]\label{prob:5}
    Fix $\varepsilon_1,\varepsilon_2\in(0,1)$ and $\delta\in(0,1)$ with $\varepsilon_1<\varepsilon_2$. Given a set of $t_{\s{T}}$ pairs of Markovian trajectories $\left[\left(\bX^{\s{F}}_{u}(t_1), \bX^{\s{R}}_{u}(t_1)\right),\dots,\left(\bX^{\s{F}}_{u}(t_{\s{T}}), \bX^{\s{R}}_{u}(t_{\s{T}})\right)\right]$ drawn from an \emph{unknown} corresponding pair of Markov chains $\p{\s{Q}^{\f{F}}_{u},\s{P}^{\f{R}}_{u}}$, for each user $u\in\s{U}$, an $(\varepsilon_1, \varepsilon_2, \delta)$-sum of pairs tolerant closeness testing algorithm outputs \texttt{YES} if $\mathds{V}_{\s{filter}}\leq \varepsilon_1$ and `\texttt{NO} if $\mathds{V}_{\s{filter}}\geq\varepsilon_2$, with probability at least $1-\delta$.
\end{problem}
As we mentioned earlier, the testing problem above is similar to the well-studied Markov tolerant closeness testing problem (e.g., \cite{pmlr-v132-chan21a}). Nonetheless, the vanilla setting of this type of testing, is simpler than the one we are after, mainly because in our problem we deal with a \emph{sum} of the distances between pairs of latent Markov chains, rather than a single distance, as it is in the standard setting. Finally, Figure~\ref{fig:smp_process} illustrates the filtered vs. reference testing scheme considered in this paper.

\begin{remark}[Worst-case violation]
As we have mentioned in the introduction and above, in this paper we focus on a global approach by averaging the influence of the platform on the users. Here, we would like to mention that using the same techniques we develop in this paper, a worst-case approach, in the same vein as in \cite{cen2020regulating}, can be analyzed as well. Specifically, the idea in the worst-case approach is that if the platform's influence on the most gullible user's decision-making exceeds a predefined threshold, then it would mean a violation of the platform-user agreement. Now, within the proposed framework, we define this most gullible user as,
$$
u_{\s{gullible}}=\argmax_{u\in[\s{U}]}\left\|\s{P}^{\f{R}}_{u,b}-\s{Q}^{\f{F}}_{u,b}\right\|_{\infty},
$$
i.e., the platform's influence on his feed is the most significant. Intuitively, if the filtered feed satisfies regulation for the most gullible user, then it satisfies regulation for all other users whose learning is, by definition, less affected by the filtered feed, in the above sense. Accordingly, the auditor testing problem can be formulated as testing between
\begin{align}
    \calH^{\s{worst}}_{0}:\max_{u\in[\s{U}]}\left\|\s{P}^{\f{R}}_{u,b}-\s{Q}^{\f{F}}_{u,b}\right\|_{\infty}\leq\varepsilon_1\hspace{0.2cm}\quad\s{vs.}\quad\calH^{\s{worst}}_{1}:\max_{u\in[\s{U}]}\left\|\s{P}^{\f{R}}_{u,b}-\s{Q}^{\f{F}}_{u,b}\right\|_{\infty}\geq\varepsilon_2.
\end{align}    
\end{remark} 

\begin{figure}[t]
\centering\scalebox{1}{
\begin{tikzpicture}
	\tikzstyle{block} = [outer sep=0pt, fill=blue!20, fill=bl,draw,rectangle, minimum width=2cm, minimum height=1.4cm]
	\node[block, fill=blue!20] (SMP) {$\s{SMP}$};
	\node[block, fill=blue!20, below=1cm of SMP, align=center] (UF) {$\s{Reference}$\\ $\s{Filter}$};
	\node[fill=bl,draw, fill=blue!20, minimum width=2cm, minimum height=3.5cm, xshift=4cm, align=center] (SN) at ($(SMP)!.5!(UF)$) {$\s{Filtered}$\\ $\s{Reference}$\\ $\s{Auditor}$};
	\coordinate (p) at (SN.west);
	\coordinate (p1) at (SMP.east);
	\coordinate (p2) at (UF.east);
	\draw[-stealth', line width=1.1pt] (p1)-- node[above] (zf) {$\mathbf{X}^{\s{F}}$} (p1-|p);
	\draw[-stealth',line width=1.1pt] (p2)-- node[above] {$\mathbf{X}^{\s{R}}$} (p2-|p);
	
	\draw[-stealth', line width=1.1pt] (SN)-- node[above, pos=.37] {\texttt{YES/NO}} +(4,0);
	\draw[stealth'-,line width=1.1pt] (SMP)-- node[above,pos=.68] {$\s{External}$ $\s{Data}$ $\mathbf{X}$} node[pos=.2, circle, scale=.6pt,fill=black] (n1) {} +(-6,0);
	\draw[-stealth', line width=1.1pt] (n1.center)|- node[pos=.51, circle, scale=.6pt,fill=black] (n2) {} (UF.west);
	\draw[bl2,dashed, color = black, line width=1.5pt] ([xshift=-4mm, yshift=1.5cm]n1) rectangle ([xshift=2.2cm,yshift=-.4cm]SN.south east);
	
	\node[above=6.5mm, xshift=6.5mm,yshift=-.15cm] at (zf) {$\s{Auditing}$ $\s{Procedure}$};
\end{tikzpicture}}
\caption{An illustration of the auditing procedure. The SMP and the uniformal filter get as an input the external data procedure, then output the filtered and the reference feeds, respectively. Both feeds are seen by the auditor, where the last outputs ``\texttt{YES}" when the regulation is not violated, or ``\texttt{NO}" otherwise. }
\label{fig:smp_process}
\end{figure}
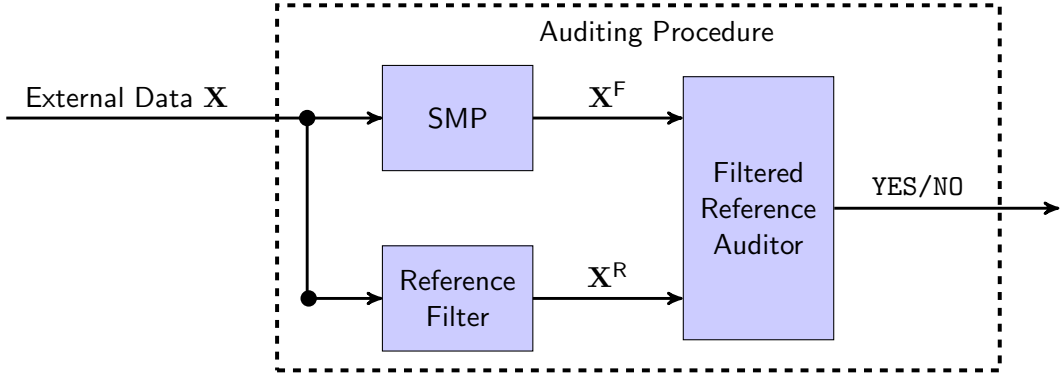

\subsection{Auxiliary definitions and lemmas}
\label{sec:Auxiliary Definitions and Lemmas}

This subsection is devoted to present several notations, definitions, and a lemma that will be needed to present our main results. As mentioned in the previous subsection, the problem of closeness testing of a single pair of Markov chains was considered in, for example, \cite{pmlr-v132-chan21a}; it was shown that the testing algorithm and sample complexity depend on the $k$-cover time. The former is defined as the first time that a random walk has visited every state of the Markov chain at least $k$ times, while the later is the maximization of the expectation of this random variable over all initial states. As a natural generalization, we define the $\ell$-joint-$k$-cover time, as the expected time it takes for $\ell\geq1$ independent random walks to cover all states at least $k$ times. In the language of our framework, an interpretation of this $\ell$-joint-$k$-cover time is the expected time it takes the platform to show all users all feasible contents.

\begin{definition}[$\ell$-joint-$k$-cover time]Let $\s{Z}_{1,1}^{\infty},\s{Z}_{2,1}^{\infty},...,\s{Z}_{\ell,1}^{\infty}$ be $\ell$-independent infinite trajectories drawn by the same Markov chain $\mathscr{M}$. For $t\geq1$, let $\{\calN_i^{\s{Z}_{j}}(t), \forall i \in[n]\}$ be the counting distribution of states $i\in[n]$ appearing in the subtrajectory $\s{Z}_{j,1}^{t}$ up to time $t$. For any $k,\ell \in \mathbb{N}$, the random $\ell$-joint-$k$-cover time $\tau_{\s{cov}}^{(k)}(\ell;\mathscr{M})$, is the first time when all $\ell$ independent random walks have jointly visited every state of $\mathscr{M}$ at least $k$ times, i.e., 
\begin{align}
\tau_{\s{cov}}^{(k)}(\ell;\mathscr{M}) \triangleq \inf \left\{t\geq0: \forall i \in[n],\sum_{j=1}^{\ell}\calN_{i}^{\s{Z}_{j}}(t)\geq k\right\}.    
\end{align}
Accordingly, the $\ell$-joint-$k$-cover time is given by
\begin{align}
t_{\s{cov}}^{(k)}(\ell;\mathscr{M})  \triangleq \max _{\mathbf{v} \in[n]^\ell} \mathbb{E}\left[\tau_{\s{cov}}^{(k)}(\ell;\mathscr{M}) \mid \s{Z}_{1,1}=v_1,\s{Z}_{2,1}=v_2,...,\s{Z}_{\ell,1}=v_\ell\right],    
\end{align}
where the coordinates of $\mathbf{v}=(v_1,v_2,\dots,v_\ell)\in[n]^\ell$ correspond to initial states.\end{definition} 
Throughout the paper, we will also use the notation $t_{\s{cov}}^{(k)}(\ell;\s{P})$ to refer to $t_{\s{cov}}^{(k)}(\ell;\mathscr{M})$, where $\s{P}$ denotes the matrix transition probabilities of $\mathscr{M}$. For simplicity of notation, we denote $t_{\s{cov}}(\mathscr{M}) \equiv t_{\s{cov}}^{(1)}(1;\mathscr{M})$. In addition, unless we explicitly deal with two different chains, we omit the dependency of $t_{\s{cov}}^{(k)}(\ell;\mathscr{M})$ on $\mathscr{M}$ and use $t_{\s{cov}}^{(k)}(\ell)$ instead. We denote by $\bf{\pi}$ the stationary distribution of $\mathscr{M}$, and accordingly we define the \emph{mixing time} as $t_{\s{\s{mix}}}(\mathscr{M}) \triangleq \min \left\{t \geq 1: \max _{\mu \in \Delta_{d}}\s{d}_{\s{TV}}(\mu\mathscr{M}^{t},\pi) \leq 1/4 \right\}$. Finally, we denote the minimum stationary probability as $\pi_\star = \min_{i\in[n]}\pi_i$. Our goal is to bound $t_{\s{cov}}^{(k)}(\ell;\mathscr{M})$ in terms of $t_{\s{cov}}(\mathscr{M})$. To date, studies have focused on one of the following two separate cases:
\begin{enumerate}
    \item Upper bounding the expected time required to cover all $n$ states of an irreducible Markov chain, $k$ times, with a single random walk, given by $t_{\s{cov}}^{(k)}(1;\mathscr{M})$, in the terms of $t_{\s{cov}}(\mathscr{M})$. Specifically, it is shown in \cite{pmlr-v132-chan21a} that for irreducible Markov chain,
    \begin{align}
    t_{\s{cov}}^{(k)}(1;\mathscr{M})=\calO\left(t_{\s{cov}}(\mathscr{M})\log n+\frac{k\log n}{\pi_{\star}}\right).
    \end{align}
    \item Upper bounding the expected time required to cover all $n$ states of some general irreducible Markov chain, with $\ell$ multiple independent random walks, given by $t_{\s{cov}}^{(1)}(\ell;\mathscr{M})$, in the terms of $t_{\s{cov}}$. Many bounds, relying on different assumptions, exist in the literature. For example, combining Theorem 3.2, Lemma 4.3, and  Theorem 4.7 in \cite{rivera_sauerwald_sylvester_2023}, we get that for irreducible Markov chain,
\begin{align}
   t_{\s{cov}}^{(1)}(\ell;\mathscr{M})= \calO\p{t_{\s{mix}}(\mathscr{M})\vee\frac{t_{\s{cov}}(\mathscr{M})\log n}{\ell}}.
\end{align}
\end{enumerate}
For our case, we obtain the following bound on the $t_{\s{cov}}^{(k)}(\ell;\mathscr{M})$, for any $k,\ell\geq1$.
\begin{lemma}\label{$m$-joint-$k$-cover time upper bound}
For any $k,\ell\geq 1$ and irreducible Markov chains $\mathscr{M}$,
\begin{align}
t_{\s{cov}}^{(k)}(\ell;\mathscr{M})= \mathcal{O}\p{\pp{t_{\s{mix}}(\mathscr{M})\vee\frac{t_{\s{cov}}(\mathscr{M})\log n}{\ell}}\log n+\frac{k\log n}{\pi_{\star}}}.   
\end{align}
\end{lemma}

Finally, following \cite{daskalakis2018testing}, for a length $q$ trajectory $\s{Z}_{1}^q$ of an irreducible Markov chain $\mathscr{M}$, and for any state $i\in[n]$, we define the mapping $\psi_{k}^{(i)}(\s{Z}_{1}^q)$ as follows: we
look at the first $k$ visits to state $i$ (i.e., at times $t= t_1,\ldots,t_{k}$ with $\s{Z}_t=i$) and write down the corresponding transitions in $\s{Z}_{1}^q$, i.e., $\s{Z}_{t+1}$. In other words, the mapping returns the $k$ succeeding states of state $i$. We note that every state is visited almost surely, since $\mathscr{M}$ is an irreducible finite-state Markov chain. Therefore, the above mapping defines a proper probability distribution. Most importantly, as we will show later on, this distribution is independent across all different states and/or independent for a particular state $i$ because of the Markov property. 

\section{Main Results}
\label{Regulatory Procedure section}

In this section, we present our main results. Specifically, in Subsection~\ref{sec:Tolerant-Closeness-Tester}, we start by presenting an algorithm, along with sample complexity guarantees, for closeness testing the sum of distances of pairs of discrete distributions using i.i.d. samples. This in turn will serve as a sub-routine in the auditing procedure we propose and analyze in Subsection~\ref{subsec:regulatrotyproceduremain}. Finally, in Subsection~\ref{subsec:Counterfactual} we analyze the counterfactual regulation approach.

\subsection{Warm up: Testing a family of discrete distributions}
\label{sec:Tolerant-Closeness-Tester}

As a warm-up, we start by generalizing the vanilla i.i.d. tolerant closeness testing problem (e.g., \cite{canonne2021price}), to the case where one is given a \emph{set} of pairs of measurements drawn from a \emph{set} of pairs of probability distributions, and is tasked with deciding whether the total sum of distances between these pairs of distributions is close or far away. This problem is formulated mathematically as follows.
\begin{problem}[Sum closeness testing]\label{prob:3}
    Given sample access the pairs of distributions $(P_{u},Q_u)$ over $[n]$, for $u\in[\s{U}]$, and bounds $\varepsilon_{2}>\varepsilon_{1} \geq 0$, and $\delta>0$, distinguish with probability of at least $1-\delta$ between $\sum_{u=1}^{\s{U}}\|P_{u}-Q_{u}\|_{1} \leq |\s{U}|\cdot\varepsilon_{1}$ and $\sum_{u=1}^{\s{U}}\|P_{u}-Q_{u}\|_{1} \geq |\s{U}|\cdot\varepsilon_{2}$, whenever the distributions satisfy one of these two inequalities.
\end{problem}
The vanilla i.i.d. tolerant closeness testing corresponds to  $\s{U}=1$. As we will see in the following subsection, an algorithm to Problem~\ref{prob:3} will serve as a building block to the actual testing problem we are after in Problem~\ref{prob:5}. We next propose a procedure solving the above testing problem along with sample complexity guarantees. We establish first a few notations. Let $\calS_{u,P}$ and $\calS_{u,Q}$ be two sets of $m\in\N$ samples drawn from $P_u$ and $Q_u$, respectively, for all $u\in[\s{U}]$, and let $\calS_P \triangleq \{\calS_{1,P},\ldots,\calS_{\s{U},P}\}$ and $\calS_Q \triangleq \{\calS_{1,Q},\ldots,\calS_{\s{U},Q}\}$. For every $u\in\s{U}$, let $V_{u,i}$ and $\tilde{V}_{u,i}$ count the number of occurrences of symbol $i\in[n]$, in the first and the second sets of $m$ samples (each), sampled from $P_{u}$, respectively. Similarly, we denote $Y_{u,i}$ and $\tilde{Y}_{u,i}$ the corresponding samples from $Q_u$, for every $u\in\s{U}$. As is customary in the literature of distributional testing (e.g., \cite{canonne2021price}), we use the ``Poissonization" trick, and assume that the sample sizes of $P_{u}$ and $Q_{u}$, for every symbol $i\in[n]$, are Poisson-distributed with mean $m$, namely, $V_{u,i},\tilde{V}_{u,i}\sim\s{Poisson}\left(m\cdot P_{u,i}\right)$ and $Y_{u,i},\tilde{Y}_{u,i}\sim\s{Poisson}\left(m\cdot Q_{u,i}\right)$, where $P_{u,i}$ ($Q_{u,i}$) is the probability of symbol $i$ under $P_u$ ($Q_u$). Define, 
\begin{align}
    f_{u,i} \triangleq
     \begin{cases}
       \Max\left\{\sqrt{mn}|P_{u,i} - Q_{u,i}|,n(P_{u,i} + Q_{u,i}),1\right\}, & \quad\text{if}\quad m>n,\\
       \Max\left\{m(P_{u,i} - Q_{u,i}),1\right\}, &\quad\text{otherwise,} \\ 
     \end{cases}\label{fuidef}
\end{align}
where $\tilde{V}_{u,i},\tilde{Y}_{u,i}$ are used to estimate $f_{u,i}$ with $\widehat{{f}_{u,i}}$, defined as,
\begin{align}
\hat{f}_{u,i} \triangleq 
     \begin{cases}
       \Max\left\{\frac{|\tilde{V}_{u,i} - \tilde{Y}_{u,i}|}{\sqrt{m/n}}|,\frac{\tilde{V}_{u,i} + \tilde{Y}_{u,i}}{m/n},1\right\}, & \quad\text{if}\quad m>n,\\
       \Max\left\{\tilde{V}_{u,i} + \tilde{Y}_{u,i},1\right\}, &\quad\text{otherwise.} \\ 
     \end{cases}
\end{align}
Additionally, define $G_{u,i}\triangleq(V_{u,i}-Y_{u,i})^2-V_{u,i}-Y_{u,i}$, and finally, 
\begin{align}
G \triangleq \sum_{u=1}^{\s{U}}\sum_{i=1}^{\s{R}}\frac{G_{u,i}}{\hat{f}_{u,i}}.\label{eqn:G_est}
\end{align}
Consider the routine $\textsc{IIDTester}(\calS_P,\calS_Q,\delta,\varepsilon_1,\varepsilon_2,m,n)$ in Algorithm~\ref{algo:tester_HT_4}. The constant $c>0$ is an absolute constant determined in the course of the analysis. We have the following result.
\begin{theorem}[Sample complexity]
There exists an absolute constant $c>0$ such that, for any $0\leq \varepsilon_{2} \leq 1$ and $0 \leq \varepsilon_{1} \leq c \varepsilon_{2}$, given
\begin{align}
    m = \mathcal{O}\left(\sqrt{\frac{n}{\varepsilon_2^4\delta\s{U}}}+n\frac{\varepsilon_1^2}{\varepsilon_2^4}+n\frac{\varepsilon_1}{\varepsilon_2^2}+\frac{n^{2/3}}{\s{U}\varepsilon_2^{4/3}}\right),
\end{align}
samples from each of $\{P_{u}\}_{u=1}^{\s{U}}$ and $\{Q_{u}\}_{u=1}^{\s{U}}$, Algorithm \ref{algo:tester_HT_4} distinguish between $\sum_{u=1}^{\s{U}}\|P_{u}-Q_{u}\|_{1} \leq \s{U}\cdot\varepsilon_{1}$ and $\sum_{u=1}^{\s{U}}\|P_{u}-Q_{u}\|_{1} \geq \s{U}\cdot\varepsilon_{2}$, with probability at least $1-\delta$.\label{theorem:iid_sample_complexity}
\end{theorem}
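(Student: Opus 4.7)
}

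The plan is to carry out a standard Chebyshev-based closeness testing analysis adapted to a \emph{sum} of $\s{U}$ pairwise distances, following the $\chi^2$-style construction of Chan–Diakonikolas–Valiant–Valiant. Concretely, I will (i) show that $G$ in \eqref{eqn:G_est} is an unbiased estimator (up to the $\widehat{f}$ plug-in) of a quantity proportional to $\sum_{u}\|P_u-Q_u\|_1$, (ii) bound $\text{Var}(G)$ under both hypotheses, (iii) apply Chebyshev's inequality to show that a suitable threshold separates the two cases, and (iv) verify that replacing $f_{u,i}$ by its plug-in estimate $\widehat{f_{u,i}}$ (computed from the independent samples $\tilde V, \tilde Y$) only changes $G$ by a constant multiplicative factor on a high-probability event, so the whole analysis goes through with no asymptotic loss.

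For step (i), using Poissonization, independence across $u$ and $i$, and the identities $\mathbb{E}[V_{u,i}]=mP_{u,i}$, $\mathbb{E}[Y_{u,i}]=mQ_{u,i}$, $\text{Var}(V_{u,i}-Y_{u,i})=m(P_{u,i}+Q_{u,i})$, a direct computation gives
\begin{align*}
\mathbb{E}[G_{u,i}\mid\widehat{f_{u,i}}]=m^{2}(P_{u,i}-Q_{u,i})^{2},\qquad \mathbb{E}[G\mid\widehat{f}]=\sum_{u=1}^{\s{U}}\sum_{i=1}^{n}\frac{m^{2}(P_{u,i}-Q_{u,i})^{2}}{\widehat{f_{u,i}}}.
\end{align*}
The careful choice of $f_{u,i}$ in \eqref{fuidef} is what converts an $\ell_2^2$-type quantity into a useful proxy for $\|P_u-Q_u\|_1$. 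Indeed, in the regime $m>n$, on the coordinates where the first term in the max attains the $f$, each summand contributes at least $m^{3/2}n^{-1/2}|P_{u,i}-Q_{u,i}|$; in the remaining coordinates one shows (via Cauchy–Schwarz against $n(P_{u,i}+Q_{u,i})$) that the total mass excluded is small compared to $\|P_u-Q_u\|_1$. Summed over $u$, this yields a two-sided sandwich of $\mathbb{E}[G]$ between constant multiples of $m^{3/2}n^{-1/2}\sum_u\|P_u-Q_u\|_1$, and the analogous $m/n$-scaled bound in the sub-linear regime $m\leq n$.

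For step (ii), I will bound $\text{Var}(G)=\sum_{u,i}\text{Var}(G_{u,i}/\widehat{f_{u,i}})$ using the independence of pairs across $(u,i)$ after Poissonization and the standard Skellam fourth-moment estimate
\begin{align*}
\text{Var}(G_{u,i})=\mathcal{O}\!\left(m^{2}(P_{u,i}+Q_{u,i})^{2}+m^{3}(P_{u,i}+Q_{u,i})(P_{u,i}-Q_{u,i})^{2}\right),
\end{align*}
divided by $\widehat{f_{u,i}}^{2}$. The first term, summed and combined with $f_{u,i}\gtrsim n(P_{u,i}+Q_{u,i})$, contributes a noise term $\mathcal{O}(\s{U} m^{2}/n)$; the second term contributes a ``signal-coupled'' noise $\mathcal{O}(m^{5/2}n^{-1/2}\sum_{u}\|P_u-Q_u\|_1)$. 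Setting the threshold at roughly $\tfrac{1}{2}(\s{U}\varepsilon_1+\s{U}\varepsilon_2)\cdot m^{3/2}/\sqrt{n}$ and demanding that the standard deviation be a constant fraction of the gap $\s{U}(\varepsilon_2-\varepsilon_1)\cdot m^{3/2}/\sqrt{n}$ yields (by Chebyshev and the $\delta$ factor) the claimed four-term bound: the $\sqrt{n/(\varepsilon_2^{4}\delta\s{U})}$ term from the constant-noise piece, the $n\varepsilon_1^{2}/\varepsilon_2^{4}$ and $n\varepsilon_1/\varepsilon_2^{2}$ terms from the signal-coupled noise under $\calH_0$, and the $n^{2/3}/(\s{U}\varepsilon_2^{4/3})$ term from the sub-linear regime $m\leq n$.

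For step (iv), a concentration argument on the Poissons $\tilde V_{u,i},\tilde Y_{u,i}$ (together with a union bound across the $\mathcal{O}(\s{U} n)$ cells and a truncation at level $1$ in the definition of $\widehat{f_{u,i}}$) shows that $\widehat{f_{u,i}}=\Theta(f_{u,i})$ simultaneously for all $(u,i)$ with probability at least $1-\delta/2$; absorbing this into the Chebyshev step loses only absolute constants. The main obstacle I anticipate is the bookkeeping for step (ii)–(iii): one must carefully split the signal-coupled variance between the two regimes of $f_{u,i}$ and use Cauchy–Schwarz over the $\s{U}\cdot n$ summands so that the $\s{U}$ dependence in the final bound scales as $\sqrt{1/\s{U}}$ (for the dominant first term) rather than as $\s{U}$ — this is the place where the ``$\s{U}$ parallel copies'' pay off, and getting that scaling to come out correctly under both hypotheses $\calH_0$ and $\calH_1$ is the delicate part of the argument.
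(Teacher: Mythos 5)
Your steps (i)--(iii) follow essentially the same route as the paper's proof: the same unbiasedness computation $\mathbb{E}[G_{u,i}]=m^{2}(P_{u,i}-Q_{u,i})^{2}$, the same second-moment computation for $G_{u,i}$, the same split into the regimes $m>n$ and $m\le n$, and a Chebyshev comparison of $G$ against a threshold of the form $\tau=c\min\bigl(m^{3/2}\varepsilon_{2}/\sqrt{n},\,\s{U}m^{2}\varepsilon_{2}^{2}/n\bigr)$. One small imprecision: the lower bound on $\mathbb{E}[G\mid\widehat{f}]$ is not a two-sided sandwich linear in $\sum_{u}\|P_{u}-Q_{u}\|_{1}$; the paper obtains, via $\sum_{i}a_{i}^{2}/b_{i}\ge\bigl(\sum_{i}|a_{i}|\bigr)^{2}/\sum_{i}b_{i}$, the lower bound $\min\bigl(m^{3/2}\sum_{u}\|P_{u}-Q_{u}\|_{1}/(2\sqrt{\s{U}n}),\,m^{2}(\sum_{u}\|P_{u}-Q_{u}\|_{1})^{2}/(6\s{U}n)\bigr)$, and it is exactly this aggregated Cauchy--Schwarz step that produces the favorable $\s{U}$-scaling you flag as delicate.

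The genuine gap is your step (iv). The claim that $\widehat{f}_{u,i}=\Theta(f_{u,i})$ simultaneously for all $(u,i)$ with probability $1-\delta/2$ cannot be obtained by Poisson concentration plus a union bound, and in the sparse regime it is simply false: when $m\le n$, and also for the many cells with $m(P_{u,i}+Q_{u,i})=O(1)$ when $m>n$, the counts $\tilde{V}_{u,i},\tilde{Y}_{u,i}$ have constant-order means, so $\widehat{f}_{u,i}$ is within a constant factor of $f_{u,i}$ only with constant probability per cell; a union bound over the $\s{U}n$ cells would at best give $\widehat{f}_{u,i}=O\bigl(f_{u,i}\log(\s{U}n/\delta)\bigr)$ and would require conditions such as $m/n\gtrsim\log(\s{U}n/\delta)$, injecting logarithmic factors that are not in the claimed sample complexity. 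The paper avoids this entirely: it conditions on the $\widehat{f}_{u,i}$'s and uses only the moment bounds from Lemma~2.5 of \cite{canonne2021price}, namely $\mathbb{E}[\widehat{f}_{u,i}^{-1}]\lesssim 1/f_{u,i}$ and $\mathbb{E}[\widehat{f}_{u,i}^{-2}]\lesssim 1/f_{u,i}^{2}$ (plus the bound on $\mathbb{E}[\widehat{f}_{u,i}]$), and then applies Markov's inequality to the three aggregate quantities $\sum_{u,i}\widehat{f}_{u,i}$, $\mathbb{E}[G\mid\widehat{f}]$ and $\Var[G\mid\widehat{f}]$, each at level $\delta_{1}$. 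This is also where the polynomial dependence $\sqrt{1/\delta}$ in the first term $\sqrt{n/(\varepsilon_{2}^{4}\delta\s{U})}$ comes from; your route would yield a different ($\log$-type in $1/\delta$ but inflated by $\log(\s{U}n)$) dependence and does not match the stated bound. To repair the plan, replace step (iv) by this moment-plus-Markov argument (or re-derive those moment bounds for $\widehat{f}_{u,i}$ directly) rather than a whp multiplicative equivalence.
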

\begin{algorithm}[t]
\KwIn{$\s{U}$, $n$, $m$, $\varepsilon_1,\delta$, and samples $\calS_P$ and $\calS_Q$ from $\{(P_{u},Q_{u})\}_{u\in[\s{U}]}$.}
{\bf{Set}} 
$\tau \longleftarrow c \min \left(\frac{m^{3 / 2} \varepsilon_{2}}{n^{\frac{1}{2}}}, \frac{\s{U}m^{2} \varepsilon_{2}^{2}}{n}\right)$\\
{\bf{Compute}} $G$ in \eqref{eqn:G_est}.\\
{\bf{If }} $G<\tau,$ then {\bf{Return}} \texttt{YES}\\
{\bf{Else}} $G\geq\tau,$ then {\bf{Return}} \texttt{NO}\\
\caption{Tolerant closeness tester for the i.i.d. pairs}
\label{algo:tester_HT_4}
\end{algorithm}

\subsection{Filtered vs. reference auditing}\label{subsec:regulatrotyproceduremain}
In this subsection, we present our auditing procedure for the filtered vs. reference feeds approach. 
We denote by $m\left(n,\varepsilon_{1}, \varepsilon_{2},\delta\right)$ the sample complexity of the i.i.d. tester in Algorithm~\ref{algo:tester_HT_4}, and assume that it satisfies the condition in Theorem~\ref{theorem:iid_sample_complexity}. Let $\bar{m}\triangleq m\left(n,\varepsilon_{1}, \varepsilon_{2},\delta/4n\right)$. Consider the auditing procedure in Algorithm~\ref{algo:TolerantClosenessTestingRegu}.
\begin{algorithm}[t]
\KwIn{$\s{T}$, $n\triangleq|\calX|$, $\varepsilon_1,\varepsilon_1,\delta$, $\bar{m}$, and feeds $\{\bX^{\s{R}}_{u}(t),\bX^{\s{F}}_{u}(t)\}_{t=1}^{\s{T}}$, for $u\in[\s{U}]$.}
\KwOut{ \texttt{YES} if $\mathbb{V}_{\s{filter}} \leq \varepsilon_1$ / \texttt{NO} if $\mathbb{V}_{\s{filter}} \geq \varepsilon_2$.}
{\bf{For}} $i \leftarrow 1,2 \ldots\ldots,n $ \\
$\hspace{0.75cm}$Set $\calS^\s{R} \leftarrow \emptyset$ and $\calS^\s{F} \leftarrow \emptyset$\\
$\hspace{0.75cm}${\bf{For}} every user  $u \leftarrow 1,2 \ldots\ldots,\s{U} $ \\
 $\hspace{1.5cm}$ {\bf{If }}  $\sum_{j=1}^{\s{M}}\calN_{i}^{{\f{x}}_{j,u}^{\s{R}}}<\bar{m}$ {\bf{or}}  $\sum_{j=1}^{\s{M}}\calN_{i}^{{\f{x}}_{j,u}^{\s{F}}}<\bar{m}$ \\
$\hspace{2.25cm}$  {\bf{Return}} \texttt{NO}\\
 $\hspace{1.5cm}$ Calculate $\calS^\s{R}_u\leftarrow\cup_{j=1}^{\s{M}}\psi_{\bar{m}}^{(i)}\left(\{{\f{x}}_{j,u}^{\s{R}}(t)\}_{t=1}^{\s{T}}\right)$ and  $\calS^\s{F}_u\leftarrow\cup_{j=1}^{\s{M}}\psi_{\bar{m}}^{(i)}\left(\{{\f{x}}_{j,u}^{\s{F}}(t)\}_{t=1}^{\s{T}}\right)$ \\
 $\hspace{1.5cm}$ Do $\calS^\s{R} \leftarrow \calS^\s{R}\cup \calS^\s{R}_u$ and $\calS^\s{F} \leftarrow \calS^\s{F}\cup \calS^\s{F}_u$ \\
 $\hspace{0.75cm}$ {\bf{If }} $\textsc{IIDTester}(\calS^\s{R},\calS^\s{F},\delta,\varepsilon_1,\varepsilon_2,\bar{m},n)=$ \texttt{NO}\\
     $\hspace{2cm}$  {\bf{Return}} \texttt{NO}\\
{\bf{Return}} \texttt{YES}
\caption{Filtered vs. reference auditing procedure}
\label{algo:TolerantClosenessTestingRegu}
\end{algorithm}
We have the following result.

\begin{theorem}[Sample complexity] Given an $(\varepsilon_1,\varepsilon_2, \delta)$ i.i.d. tolerant-closeness-tester for $n$ state distributions with the sample complexity of $m\left(n,\varepsilon_{1}, \varepsilon_{2},\delta\right)$, then we can $(\varepsilon_1,\varepsilon_2, \delta)$ testing hypothesis \eqref{eqn:HTbelief3} using,
\begin{align}
\s{T}=\calO\p{\max_{u\in[\s{U}]}\max_{\s{W}\in\{\s{Q}_u^{\s{F}},\s{P}_u^{\s{R}}\}}t_{\s{cov}}^{\bar{m}}\left(\s{M};\s{W}\right)\log \frac{\s{U}}{\delta}},\label{eqn:sampleGurantee}
\end{align}
samples per user. 
\label{lemma:m-Jointly-k-cover time and tolerant-closeness-testing Markov chains}\end{theorem}
Note that in step 8 of Algorithm~\ref{algo:TolerantClosenessTestingRegu}, feed samples $\calS^{\s{R}}$ and $\calS^{\s{F}}$ from the Markov chains are supplied to the i.i.d. tester in Algorithm~\ref{algo:tester_HT_4}. These samples are pulled using the mapping $\psi$, and thus are guaranteed i.i.d., as mentioned right after Lemma~\ref{$m$-joint-$k$-cover time upper bound}. The sample condition in \eqref{eqn:sampleGurantee} guarantees that all states are visited ``reasonable" number of times, jointly by all the $\s{M}$ chains, and for all users. Accordingly, we can apply an i.i.d. identity tester to each state's conditional distribution, and the auditing procedure return ``\texttt{YES}" if this distribution passes its corresponding i.i.d. test.

At this point we would like to mention that our auditing procedure is not required to be disclosed to the internal AF mechanism used by the platform, which may not consent to be shared. This provides also a flexibility in regulating the model with no need for adaptation with respect to any future modification of the internal AF. Furthermore, our procedure can be applied using only access to users' observations (their feeds) in order to infer the influence of the platforms on their beliefs, decision-making, and ultimately on their actions, while having no access to their actual beliefs. It is clear that this way no further privacy leakage is incurred from the auditing process.\footnote{While data hacking remains a possibility, it would not be considered a regulatory flaw, as it could occur regardless of regulation. The involved parties are the platform and auditor, where the platform possesses data access and the auditor utilizes data for testing, thus maintaining user data privacy with sensible precautions.} The bound we derived on $m$-joint $k$-cover time in Subsection~\ref{sec:Auxiliary Definitions and Lemmas} gives a simpler sample complexity bound for the auditing procedure. In particular, using Lemma~\ref{$m$-joint-$k$-cover time upper bound}, we get that the number of samples, per user, can be bounded as,
\begin{align}
\s{T}=\calO_\delta\p{\max_{u\in[\s{U}]}\max_{\s{W}\in\{\s{Q}_u^{\s{F}},\s{P}_u^{\s{R}}\}}\pp{t_{\s{mix}}(\s{W})\vee\frac{t_{\s{cov}}(\s{W})\log |\calX|}{\s{M}}}\log |\calX|+\frac{\bar{m}\log |\calX|}{\pi_{\star}(\s{W})}},
\end{align}
where $\pi_{\star}(\s{W})$ denotes the minimum stationary distribution of the Markov chain with transition probability matrix $\s{W}$, and $\mathcal{O}_{\delta}$ hides logarithmic factors in $\delta$.

\subsection{Counterfactual regulation}
\label{subsec:Counterfactual}

Above, we have focused on the ``filtered vs. reference" feeds approach. However, it is clear that other frameworks can be formulated. Consider the following as an alternative. Let $\calS$ be a \emph{regulatory statement} that an inspector (or, perhaps, the platform itself) wish to test. For example, $\calS$ could be: ``\emph{The platform should produce similar articles for users who are identical except for property $\mathscr{P}$}", where $\mathscr{P}$ could be ethnicity, sexual orientation, gender, a combination of these factors, etc. Let $\calU_{\mathscr{P}}\subset\calV\times\calV$ be a subset of pairs of users that comply with $\mathscr{P}$. Then, for any pair of users $(i,j)\in\calU_{\mathscr{P}}$, the inspector's objective is to determine whether the platform's filtering algorithm cause user $i$'s and user $j$'s beliefs and actions be significantly different. A similar approach, was studied recently in \cite{cen2020regulating} under a time-independent i.i.d. model. We take into account the inherent dependency on the time dimension as in ``real-world" applications regulations must be enforced over time, as explained in the introduction. Similarly to Subsection~\ref{subsec:regulatorgoal}, we define the notion of counterfactual violation as follows. 

\begin{definition}[Counterfactual total variability]
Let $\calU_{\mathscr{P}}\subset[\s{U}]\times[\s{U}]$ be a subset of pairs of users that comply with $\mathscr{P}$. Then, for any pair of users $(i,j)\in\calU_{\mathscr{P}}$, the total variability in algorithmic filtering behavior for counterfactual users is given by
\begin{align}
\bar{\mathds{V}}_{\s{cu}}(\calS,\calU_{\mathscr{P}})&\triangleq\frac{1}{|\calU_{\mathscr{P}}|}\sum_{(i,j)\in\calU_{\mathscr{P}}}\max _{\ell\in\calX}\s{d}_{\s{TV}}\p{Q_{i}(\ell,\cdot),Q_{j}(\ell,\cdot)}\\
&=\frac{1}{|\calU_{\mathscr{P}}|}\sum_{(i,j)\in\calU_{\mathscr{P}}}\max _{\ell\in\calX} \left\|\mathbf{Q}_{i}(\ell)- \mathbf{Q}_{j}(\ell) \right\|_{1}
\\
&= \frac{1}{|\calU_{\mathscr{P}}|}\sum_{(i,j)\in\calU_{\mathscr{P}}}\left\|\s{Q}^{\f{F}}_{i}-\s{Q}^{\f{F}}_{j}\right\|_{\infty}.\label{eqn:Vcu}
\end{align}
\end{definition}
Then, in the same spirit of the previous subsection, we define the investigator's task to test for violations in the following sense:
\begin{align}
\calH^{\calS}_{0}:\bar{\mathds{V}}_{\s{cu}}(\calS,\calU_{\mathscr{P}})\leq\varepsilon_1\;\quad\s{vs.}\quad\calH^{\calS}_{1}:\bar{\mathds{V}}_{\s{cu}}(\calS,\calU_{\mathscr{P}})\geq\varepsilon_2.\label{eqn:HT-CR}
\end{align}
As before, the goal here is to construct good inspection procedures given only $\calS$ and a black-box access to the filtering algorithm. Note also that $\calU_{\mathscr{P}}$ need not correspond to real users and could represent hypothetical users. 
Now, comparing \eqref{eqn:VfilterObjective} and \eqref{eqn:HTbelief3} with \eqref{eqn:Vcu} and \eqref{eqn:HT-CR}, it is clear that the hypothesis test in \eqref{eqn:HT-CR} is the same as the one in \eqref{eqn:HTbelief3}, if each pair of filtered and reference distributions that correspond to some user is replaced with a pair of filtered distributions that correspond to a pair of users in $\calU_{\mathscr{P}}$. Accordingly, consider the counterfactual auditing procedure that appears in Algorithm~\ref{algo:TolerantClosenessTestingRegu2}. It is clear that the underlying idea in Algorithm~\ref{algo:TolerantClosenessTestingRegu2} is the same as the one in Algorithm~\ref{algo:TolerantClosenessTestingRegu}. The following is a direct consequence of Theorem~\ref{lemma:m-Jointly-k-cover time and tolerant-closeness-testing Markov chains}. 
\begin{theorem}[Sample complexity]
Given an $(\varepsilon_1,\varepsilon_2, \delta)$ i.i.d. tolerant-closeness-tester for $n$-state distributions with sample complexity $m(n,\varepsilon_{1}, \varepsilon_{2},\delta)$, then we can $(\varepsilon_1,\varepsilon_2, \delta)$ testing hypothesis (\ref{eqn:HT-CR}) using,
\begin{align}
    \s{T} = \mathcal{O}\left(\max_{(u,v)\in\calU_{\mathscr{P}}}\max_{\s{W}\in\{\s{Q}_u^{\f{F}},\s{Q}_v^{\f{F}}\}} t_{\s{cov}}^{\tilde{m}}\left(\s{M};\s{W}\right)\log\frac{|\calU_{\mathscr{P}}|}{\delta}\right),
\end{align}
samples for each pair of users in $\calU_{\mathscr{P}}$. 
\label{Sample_complexity_Time_Dependent_Counter_Reg_Procedure}
\end{theorem}
It should be mentioned that the auditing procedure requires a black-box access to the filtering algorithm only, and the internal filtering mechanism is oblivious to the auditor (SMPs will not grant auditors full access to their filtering algorithm). This in turn also implies that auditing procedure can work even if the filtering algorithm changes over time. Finally, as in the previous subsection, the bounds we derived on $m$-joint$k$-cover time in Subsection~\ref{sec:Auxiliary Definitions and Lemmas} give simpler sample complexity bounds. Indeed, the number of samples, for each pair of users in $\calU_{\mathscr{P}}$, can be written as,
\begin{align}
\s{T}=\calO_\delta\p{\max_{(u,v)\in\calU_{\mathscr{P}}}\max_{\s{W}\in\{\s{Q}_u^{\f{F}},\s{Q}_v^{\f{F}}\}}\pp{t_{\s{mix}}(\s{W})\vee\frac{t_{\s{cov}}(\s{W})\log |\calX|}{\s{M}}}\log |\calX|+\frac{\bar{m}\log |\calX|}{\pi_{\star}(\s{W})}}.
\end{align}

\begin{algorithm}[t]
\KwIn{$\s{T}$, $n\triangleq|\calX|$, $\varepsilon_1,\varepsilon_1,\delta$, $\bar{m}$, and feeds $\{\bX^{\s{F}}_{u}(t),\bX^{\s{F}}_{v}(t)\}_{t=1}^{\s{T}}$, for every $(u,v)\in\calU_{\mathscr{P}}$.}
\KwOut{ \texttt{YES} if $\bar{\mathds{V}}_{\s{cu}}(\calS,\calU_{\mathscr{P}}) \leq \varepsilon_1$ / \texttt{NO} if $\bar{\mathds{V}}_{\s{cu}}(\calS,\calU_{\mathscr{P}}) \geq \varepsilon_2$.}
{\bf{For}} $i \leftarrow 1,2 \ldots\ldots,n $ \\
$\hspace{0.75cm}$Set $\calS\leftarrow \emptyset$ and $\tilde{\calS} \leftarrow \emptyset$\\
$\hspace{0.75cm}${\bf{For}} every pair  $(u,v)\in\calU_{\mathscr{P}}$ \\
 $\hspace{1.5cm}$ {\bf{If }}  $\sum_{j=1}^{\s{M}}\calN_{i}^{{\f{x}}_{j,u}^{\s{F}}}<\bar{m}$ {\bf{or}}  $\sum_{j=1}^{\s{M}}\calN_{i}^{{\f{x}}_{j,v}^{\s{F}}}<\bar{m}$ \\
$\hspace{2.25cm}$  {\bf{Return}} \texttt{NO}\\
 $\hspace{1.5cm}$ Calculate $\calS_u\leftarrow\cup_{j=1}^{\s{M}}\psi_{\bar{m}}^{(i)}\left(\{{\f{x}}_{j,u}^{\s{F}}(t)\}_{t=1}^{\s{T}}\right)$ and  $\tilde{\calS}_v\leftarrow\cup_{j=1}^{\s{M}}\psi_{\bar{m}}^{(i)}\left(\{{\f{x}}_{j,v}^{\s{F}}(t)\}_{t=1}^{\s{T}}\right)$ \\
 $\hspace{1.5cm}$ Do $\calS \leftarrow \calS\cup \calS_u$ and $\tilde{\calS} \leftarrow \tilde{\calS}\cup \tilde{\calS}_v$ \\
 $\hspace{0.75cm}$ {\bf{If }} $\textsc{IIDTester}(\calS,\tilde{\calS},\delta,\varepsilon_1,\varepsilon_2,\bar{m},n)=$ \texttt{NO}\\
     $\hspace{2cm}$  {\bf{Return}} \texttt{NO}\\
{\bf{Return}} \texttt{YES}
\caption{Counterfactual auditing procedure}
\label{algo:TolerantClosenessTestingRegu2}
\end{algorithm}

\section{Proofs}
This section is devoted to the proofs of our results.

\subsection{Proof of Theorem \ref{theorem:iid_sample_complexity}} 

In this subsection we prove Theorem \ref{theorem:iid_sample_complexity}. To this end, we start by proving a few auxiliary results which characterize the first and second order statistics of the count in \eqref{eqn:G_est}.

\subsubsection{Auxiliary Results}

\begin{lemma}\label{lem:9}
Let $\delta_1\in(0,1)$, and recall the definitions in \eqref{fuidef}--\eqref{eqn:G_est}. Then, there exist absolute constants $c_1,c_2,c_3>0$, such that the following hold with probability at least $1-\delta_1$,
\begin{align}
\mathbb{E}\left[\left.G \vphantom{\hat{f}_{u,i}}\right| \hat{f}_{u,i},u\in[\s{U}],i\in[n]\right] \geq \frac{\delta_1m^{2}\left(\sum_{u=1}^{\s{U}}\left\|P_{u} - Q_{u}\right\|_{1}\right)^{2}}{ c_{1}\sum_{u=1}^{\s{U}}\sum_{i=1}^{n} f_{u,i}},\label{eqn:lem91}
\end{align}
and
\begin{align}
    \mathbb{E}\left[\left.G \vphantom{\hat{f}_{u,i}}\right| \hat{f}_{u,i},u\in[\s{U}],i\in[n]\right] \leq  \frac{c_{2}}{\delta_1} \sum_{u=1}^{\s{U}}\sum_{i=1}^{n} \frac{m^{2}\left(P_{u,i} - Q_{u,i}\right)^{2}}{f_{u,i}},\label{eqn:lem92}
\end{align}
and
\begin{align}
   \s{Var}\left[\left.G \vphantom{\hat{f}_{u,i}}\right| \hat{f}_{u,i},u\in[\s{U}],i\in[n]\right]\leq \frac{c_{3}}{\delta_1} \sum_{u=1}^{\s{U}}\sum_{i=1}^{n} \frac{\s{Var}\left[G_{u,i}\right]}{f_{u,i}^2}.\label{eqn:lem93}
\end{align}    
\end{lemma}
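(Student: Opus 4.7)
The starting observation is a standard Poissonization split: since $(V_{u,i},Y_{u,i})$ are counts from the first batch and $(\tilde V_{u,i},\tilde Y_{u,i})$ from the second, they are independent, and hence the random variables $\widehat{f_{u,i}}$ are independent of the numerators $G_{u,i}$. Conditioning on the entire collection $\{\widehat{f_{u,i}}\}_{u,i}$ therefore only acts on the denominators in \eqref{eqn:G_est}, so it suffices to analyze the unconditional moments of each $G_{u,i}$ and then use Markov's inequality to pass from expectations over $\widehat{f_{u,i}}$ to a high-probability statement. By the Poisson identities $\mathbb{E}[V_{u,i}]=mP_{u,i}$, $\mathbb{E}[Y_{u,i}]=mQ_{u,i}$ together with $\mathbb{E}[(V_{u,i}-Y_{u,i})^{2}]=\operatorname{Var}(V_{u,i})+\operatorname{Var}(Y_{u,i})+m^{2}(P_{u,i}-Q_{u,i})^{2}$, I get $\mathbb{E}[G_{u,i}]=m^{2}(P_{u,i}-Q_{u,i})^{2}$, and a direct computation using the formulas for the third and fourth Poisson moments gives $\operatorname{Var}[G_{u,i}]=O\bigl(m^{2}(P_{u,i}+Q_{u,i})^{2}+m^{3}(P_{u,i}-Q_{u,i})^{2}(P_{u,i}+Q_{u,i})\bigr)$.

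For the lower bound \eqref{eqn:lem91}, the plan is to apply the Cauchy--Schwarz inequality in the form
\begin{equation*}
\sum_{u,i}\frac{m^{2}(P_{u,i}-Q_{u,i})^{2}}{\widehat{f_{u,i}}}\;\geq\;\frac{\bigl(\sum_{u,i}m\lvert P_{u,i}-Q_{u,i}\rvert\bigr)^{2}}{\sum_{u,i}\widehat{f_{u,i}}}=\frac{m^{2}\bigl(\sum_{u}\lVert P_{u}-Q_{u}\rVert_{1}\bigr)^{2}}{\sum_{u,i}\widehat{f_{u,i}}},
\end{equation*}
and then to show $\sum_{u,i}\widehat{f_{u,i}}\leq (c_{1}/\delta_{1})\sum_{u,i}f_{u,i}$ with probability $\geq 1-\delta_{1}$. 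The latter is a clean Markov-inequality argument once I establish $\mathbb{E}\bigl[\widehat{f_{u,i}}\bigr]\leq O(f_{u,i})$ termwise, which follows from Jensen applied to $|\tilde V_{u,i}-\tilde Y_{u,i}|$ (giving $\mathbb{E}|\tilde V_{u,i}-\tilde Y_{u,i}|\leq\sqrt{m(P_{u,i}+Q_{u,i})+m^{2}(P_{u,i}-Q_{u,i})^{2}}$) and from linearity applied to $\tilde V_{u,i}+\tilde Y_{u,i}$; these are then matched to the three branches in the definition of $f_{u,i}$ in \eqref{fuidef} by a case split on whether $m>n$ or $m\leq n$.

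The two upper bounds \eqref{eqn:lem92} and \eqref{eqn:lem93} admit the symmetric treatment. By conditional independence,
\begin{equation*}
\mathbb{E}\bigl[G\mid\widehat{f_{u,i}}\bigr]=\sum_{u,i}\frac{m^{2}(P_{u,i}-Q_{u,i})^{2}}{\widehat{f_{u,i}}},\qquad \operatorname{Var}\bigl[G\mid\widehat{f_{u,i}}\bigr]=\sum_{u,i}\frac{\operatorname{Var}[G_{u,i}]}{\widehat{f_{u,i}}^{2}},
\end{equation*}
so after taking expectation in $\widehat{f_{u,i}}$ it is enough to control $\mathbb{E}[1/\widehat{f_{u,i}}]\leq c/f_{u,i}$ and $\mathbb{E}[1/\widehat{f_{u,i}}^{2}]\leq c/f_{u,i}^{2}$, and then invoke Markov once more to inflate by $1/\delta_{1}$. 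The $\max\{\,\cdot\,,1\}$ truncation in the definition of $\widehat{f_{u,i}}$ is essential here: it keeps the reciprocals bounded and lets me handle the regime where the Poisson denominator fluctuates toward zero.

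The main obstacle I anticipate is precisely the moment bound $\mathbb{E}[1/\widehat{f_{u,i}}]\lesssim 1/f_{u,i}$, because it requires a lower-tail concentration estimate for both the sum $\tilde V_{u,i}+\tilde Y_{u,i}$ and the absolute difference $|\tilde V_{u,i}-\tilde Y_{u,i}|$, and one needs to argue case-by-case which of the three quantities in the defining $\max$ is dominant for $f_{u,i}$ and which is dominant for $\widehat{f_{u,i}}$. Once those elementary (but tedious) Poisson tail estimates are in hand, everything else is Cauchy--Schwarz plus Markov applied to the corresponding non-negative sums, and the three displayed inequalities follow by a final union bound that splits the $\delta_{1}$ budget across the ``denominator too small/too large'' events.
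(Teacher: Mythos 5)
Your proposal is correct and follows essentially the same route as the paper: the Poisson moment computations for $G_{u,i}$, the independence of the $\widehat{f_{u,i}}$ (built from the second batch) from the numerators, Cauchy--Schwarz for the lower bound, and Markov's inequality inflating by $1/\delta_1$ to turn the moment bounds on $\widehat{f_{u,i}}$, $1/\widehat{f_{u,i}}$, and $1/\widehat{f_{u,i}}^{\,2}$ into the stated high-probability inequalities. The only difference is that the moment estimates you flag as the main technical obstacle are not re-derived in the paper but imported directly from \cite[Lemma 2.5]{canonne2021price}.
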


\begin{proof}[Proof of Lemma~\ref{lem:9}] By standard properties of the Poisson distribution, the random variables in the definition of $G_{u,i}$ are statistically independent. Therefore, 
\begin{align}\EE[G_{u,i}]&= \EE[(V_{u,i}-Y_{u,i})^2-V_{u,i}-Y_{u,i}] \nonumber \\
&=\EE[V_{u,i}^2] -2\EE[V_{u,i}]\EE[Y_{u,i}] + \EE[Y_{u,i}^2] - \EE[V_{u,i}] - \EE[Y_{u,i}]\nonumber \\
&=(m P_{u,i})^2 +mP_{u,i} -2m^2P_{u,i}Q_{u,i} + (m Q_{u,i})^2 + m Q_{u,i}- m^2P_{u,i} - m Q_{u,i}\nonumber \\
&=(m P_{u,i})^2 -2m^2P_{u,i}Q_{u,i} + (m Q_{u,i})^2= m^2(P_{u,i}-Q_{u,i})^2\nonumber \\
& = m^2\left|P_{u,i}-Q_{u,i}\right|^2.
\label{E of G}
\end{align}
Hence, $G_{u,i}$ is an unbiased estimator of $m^2|P_{u,i} - Q_{u,i}|^2$. Similarly,
\begin{align}\Var(G_{u,i})&=\Var[(V_{u,i}-Y_{u,i})^2-V_{u,i}-Y_{u,i}]\nonumber \\
&=\EE[((V_{u,i}-Y_{u,i})^2-V_{u,i}-Y_{u,i})^2] - \EE[((V_{u,i}-Y_{u,i})^2-V_{u,i}-Y_{u,i})]^2\nonumber \\
&=\EE[((V_{u,i}-Y_{u,i})^4-2(V_{u,i}-Y_{u,i})^3+(V_{u,i}-Y_{u,i})^2]  \nonumber \\
&\ \ \ \ \ \ \ -\EE[((V_{u,i}-Y_{u,i})^2-V_{u,i}-Y_{u,i})]^2\nonumber \\
&=4m^3(P_{u,i}-Q_{u,i})^2(P_{u,i}+Q_{u,i})+2m^2(P_{u,i}+Q_{u,i}).
\label{Var of Guti}
\end{align}
Next, using the fact that $G_{u,i}$ and $\hat{f}_{u,i}$ are independent, by the linearity of the expectation, we obtain that the conditional expectation of $G$ is,
\begin{align}\EE\left[\left.G \vphantom{\hat{f}_{u,i}}\right|\hat{f}_{u,i},u\in[\s{U}],i\in[n]\right]=&\EE\left[\sum_{u=1}^{\s{U}}\sum_{i=1}^{n}\frac{G_{u,i}}{\hat{f}_{u,i}}|\hat{f}_{u,i},u\in[\s{U}],i\in[n]\right] \\
&=\sum_{u=1}^{\s{U}}\sum_{i=1}^{n}\frac{\EE\left[G_{u,i}\right]}{\hat{f}_{u,i}}. \label{conditional expectation of G}
\end{align}
Similarly, the conditional variance of $G$ is,
\begin{align}\Var\left[\left.G \vphantom{\hat{f}_{u,i}}\right|\hat{f}_{u,i},u\in[\s{U}],i\in[n]\right]&=\Var\left[\sum_{u=1}^{\s{U}}\sum_{i=1}^{n}\frac{G_{u,i}}{\hat{f}_{u,i}} |\hat{f}_{u,i},u\in[\s{U}],i\in[n]\right]\\
&=\sum_{u=1}^{\s{U}}\sum_{i=1}^{n}\frac{\Var\left[G_{u,i}\right]}{\hat{f}_{u,i}^2}. \label{conditional variance of G}
\end{align}
Combining \eqref{E of G} and \eqref{conditional expectation of G} we get,
\begin{align}
\mathbb{E}\left[\left.G \vphantom{\hat{f}_{u,i}}\right|\hat{f}_{u,i},u\in[\s{U}],i\in[n]\right]&=\sum_{u=1}^{\s{U}}\sum_{i=1}^{n}\frac{\mathbb{E}\left[G_{u,i}\right]}{\hat{f}_{u,i}}\\
&=\sum_{u=1}^{\s{U}}\sum_{i=1}^{n} \frac{m^{2}\left(P_{u,i} - Q_{u,i}\right)^{2}}{\hat{f}_{u,i}}\\
&\geq\frac{m^{2}\left(\sum_{u=1}^{\s{U}}\sum_{i=1}^{n}\left|P_{u,i} - Q_{u,i}\right|\right)^{2}}{\sum_{u=1}^{\s{U}}\sum_{i=1}^{n} \hat{f}_{u,i}}\\
&\geq\frac{m^{2}\left(\sum_{u=1}^{\s{U}}\left\|P_{u} - Q_{u}\right\|_1\right)^{2}}{\sum_{u=1}^{\s{U}}\sum_{i=1}^{n} \hat{f}_{u,i}},
\end{align}
where the first inequality follows from the following fact that for any sequence of real-valued numbers $\{a_{i}\}_{i=1}^{n}$ and positive  real-valued numbers $\left\{b_{i}\right\}_{i=1}^{n}$, we have,
\begin{align}
\sum_{i=1}^{n} \frac{a_{i}^{2}}{b_{i}} \geq \frac{\left(\sum_{i=1}^{n}\left|a_{i}\right|\right)^{2}}{\sum_{i=1}^{n} b_{i}},
\end{align}
and the last inequality follows by applying Cauchy-Schwarz to, \begin{align}
\sum_{i=1}^{n}\left|a_{i}\right|=\sum_{i=1}^{n} \frac{\sqrt{b_{i}}\left|a_{i}\right|}{\sqrt{b_{i}}}.
\end{align}
Next, Lemma 2.5 in \cite{canonne2021price} states that there exist absolute constants $c_{1}, c_{2}, c_{3}>0$ such that, for every $u\in[\s{U}],i \in[n]$, we have $\mathbb{E}[ \hat{f}_{u,i}] \leq c_{1}f_{u,i}$, $\mathbb{E}[ \hat{f}_{u,i}^{-1}] \leq \frac{c_{2}}{ f_{u,i}}$, and $\mathbb{E}[ \hat{f}_{u,i}^{-2}] \leq \frac{c_{3}}{ {f^{2}_{u,i}}}$. Moreover, by definition, the random random variables $\hat{f}_{u,i}$ are non-negative, and thus, applying by Markov's inequality, we obtain that, 
\begin{align}
\sum_{i=1}^{n} \hat{f}_{u,i} \leq\frac{1}{\delta_1} \sum_{i=1}^{n} \mathbb{E}\left[\hat{f}_{u,i}\right],
\end{align}
with probability at least $1-\delta_1$, for any $u\in[\s{U}]$. Combined with Lemma 2.5 in \cite{canonne2021price}, this means that, with probability at least $1-\delta_1$,
\begin{align}
\mathbb{E}\left[\left.G \vphantom{\hat{f}_{u,i}}\right|\hat{f}_{u,i},u\in[\s{U}],i\in[n]\right] \geq \frac{\delta_1m^{2}\left(\sum_{u=1}^{\s{U}}\left\|P_{u} - Q_{u}\right\|_{1}\right)^{2}}{ c_{1}\sum_{u=1}^{\s{U}}\sum_{i=1}^{n} f_{u,i}}.
\label{first bound}
\end{align}
Next, applying Markov's inequality for the non-negative random variable $\mathbb{E}\left[\left.G \vphantom{\hat{f}_{u,i}}\right|\hat{f}_{u,i},u\in[\s{U}],i\in[n]\right]$, along with Lemma 2.5 in \cite{canonne2021price}, we obtain with probability at least $1-\delta_1$,
\begin{align}
\mathbb{E}\left[\left.G \vphantom{\hat{f}_{u,i}}\right|\hat{f}_{u,i},u\in[\s{U}],i\in[n]\right] &\leq \frac{1}{\delta_1} \mathbb{E}\left[\mathbb{E}\left[\left.G \vphantom{\hat{f}_{u,i}}\right|\hat{f}_{u,i},u\in[\s{U}],i\in[n]\right]\right]\\
&=\frac{1}{\delta_1} \mathbb{E}\left[\sum_{u=1}^{\s{U}}\sum_{i=1}^{n}\frac{\mathbb{E}\left[G_{u,i}\right]}{\hat{f}_{u,i}}\right] \\
&\leq  \frac{c_{2}}{\delta_1} \sum_{u=1}^{\s{U}}\sum_{i=1}^{n} \frac{m^{2}\left(P_{u,i} - Q_{u,i}\right)^{2}}{f_{u,i}}.
\label{second bound}
\end{align}
Similarly, with probability at least $1-\delta_1$,
\begin{align}
\s{Var}\left[\left.G \vphantom{\hat{f}_{u,i}}\right|\hat{f}_{u,i},u\in[\s{U}],i\in[n]\right] &\leq \frac{1}{\delta_1} \mathbb{E}\left[\s{Var}\left[\left.G \vphantom{\hat{f}_{u,i}}\right|\hat{f}_{u,i},u\in[\s{U}],i\in[n]\right]\right]\nonumber\\
&=\frac{1}{\delta_1} \mathbb{E}\left[\sum_{u=1}^{\s{U}}\sum_{i=1}^{n}\frac{\s{Var}\left[G_{u,i}\right]}{\hat{f}_{u,i}^2}\right] \\
&\leq \frac{c_{3}}{\delta_1} \sum_{u=1}^{\s{U}}\sum_{i=1}^{n} \frac{\s{Var}\left[G_{u,i}\right]}{f_{u,i}^2}.
\label{thired bound}
\end{align}
This concludes the proof.
\end{proof}

By the union bound, we conclude that \eqref{eqn:lem91}--\eqref{eqn:lem93}, hold simultaneously with probability at least $1-3\delta_1$. We next bound the terms in the right-hand-side of \eqref{eqn:lem91}--\eqref{eqn:lem93}, separately for $m \geq n$ and $m \leq n$, respectively. We follow similar ideas as in \cite[Lemma 2.3]{canonne2021price} and \cite[Lemma 2.4]{canonne2021price}. We have the following result.

\begin{lemma} For $m \geq n$, the following hold,
\begin{align}
    \sum_{u=1}^{\s{U}}\sum_{i=1}^{n} \frac{\s{Var}\left[G_{u,i}\right]}{f_{u,i}^2} &\leq \frac{10\s{U}m^{2}}{n},\label{eqn:firstIneq}\\
    \sum_{i=1}^{n} \frac{m^2(P_{u,i}-Q_{u,i})^2}{f_{u,i}}&\leq \frac{m^{3 / 2}\|P_{u}-Q_{u}\|_{1}}{n^{\frac{1}{2}}},\label{eqn:secondIneq}
\end{align}
and
\begin{align}
    \frac{m^{2}\left(\sum_{u=1}^{\s{U}}\left\|P_{u} - Q_{u}\right\|_{1}\right)^{2}}{\sum_{u=1}^{\s{U}}\sum_{i=1}^{n} f_{u,i}}  \geq \min \left(\frac{m^{3 / 2} \sum_{u=1}^{\s{U}}\left\|P_{u} - Q_{u}\right\|_{1} }{2 \left(\s{U}n\right)^{\frac{1}{2}}}, \frac{m^{2}\left(\sum_{u=1}^{\s{U}}\left\|P_{u} - Q_{u}\right\|_{1}\right)^{2}}{6\s{U}n}\right).\label{eqn:thirdIneq}
\end{align}
\label{utlizem>n}
\end{lemma}

\begin{proof}[Proof of Lemma \ref{utlizem>n}] We start by proving \eqref{eqn:firstIneq}. From \eqref{Var of Guti}, we get
\begin{align}
\sum_{u=1}^{\s{U}}\sum_{i=1}^{n} \frac{\s{Var}\left[G_{u,i}\right]}{f_{u,i}^2} &=\sum_{u=1}^{\s{U}}\sum_{i=1}^{n} \frac{4m^3(P_{u,i}-Q_{u,i})^2(P_{u,i}+Q_{u,i})+2m^2(P_{u,i}+Q_{u,i})^{2}}{f_{u,i}^{2}} \\
&=\sum_{u=1}^{\s{U}}\sum_{i=1}^{n} \frac{4 m^{3}\left(P_{u,i}-Q_{u,i}\right)^{2}\left(P_{u,i}+Q_{u,i}\right)+2 m^{2}\left(P_{u,i}+Q_{u,i}\right)^{2}}{\left(\max \left\{\sqrt{m n} \cdot\left|P_{u,i}-Q_{u,i}\right|, n \cdot\left(P_{u,i}+Q_{u,i}\right), 1\right\}\right)^{2}}\label{eqn:deno} \\
& \leq \sum_{u=1}^{\s{U}}\sum_{i=1}^{n} \frac{4 m^{3}\left(P_{u,i}+Q_{u,i}\right)}{m n}+\sum_{u=1}^{\s{U}}\sum_{i=1}^{n} \frac{2 m^{2}}{n^{2}}\\
&=\frac{10\s{U}m^{2}}{n},
\end{align}
where the inequality follows by lower bounding the denominator in \eqref{eqn:deno} by $mn(P_{u,i}-Q_{u,i})^2$ for the first term in the numerator, and by $n^2(P_{u,i}+Q_{u,i})^2$ for the second term in the numerator. Next, we prove \eqref{eqn:secondIneq}. We have,
\begin{align}
 \sum_{i=1}^{n} \frac{m^2(P_{u,i}-Q_{u,i})^2}{f_{u,i}} &= \sum_{i=1}^{n} \frac{m^{2}\left|P_{u,i}-Q_{u,i}\right|^{2}}{\max \left\{\sqrt{m n} \cdot\left|P_{u,i}-Q_{u,i}\right|,n \cdot\left(P_{u,i}+Q_{u,i}\right), 1\right\}} \\
& \leq  \sum_{i=1}^{n} \frac{m^{3 / 2}\left|P_{u,i}-Q_{u,i}\right|}{n^{\frac{1}{2}}}\\
&=\frac{m^{3 / 2}\|P_{u}-Q_{u}\|_{1}}{n^{\frac{1}{2}}}.
\end{align}
Finally, we prove \eqref{eqn:thirdIneq}. Note that,
\begin{align}
\frac{m^{2}\left(\sum_{u=1}^{\s{U}}\left\|P_{u} - Q_{u}\right\|_{1}\right)^{2}}{\sum_{u=1}^{\s{U}}\sum_{i=1}^{n} f_{u,i}}&=\frac{m^{2}\left(\sum_{u=1}^{\s{U}}\left\|P_{u} - Q_{u}\right\|_{1}\right)^{2}}{\sum_{u=1}^{\s{U}}\sum_{i=1}^{n} \max \left\{\sqrt{m n} \cdot\left|P_{u,i}-Q_{u,i}\right|,n \cdot\left(P_{u,i}+Q_{u,i}\right), 1\right\}} \nonumber\\
& \geq \frac{m^{2}\left(\sum_{u=1}^{\s{U}}\left\|P_{u} - Q_{u}\right\|_{1}\right)^{2}}{\sum_{u=1}^{\s{U}}\sum_{i=1}^{n}\left(\sqrt{m n} \cdot\left|P_{u,i}-Q_{u,i}\right|+n \cdot\left(P_{u,i}+Q_{u,i}\right)+1\right)}\\
&=\frac{m^{2}\left(\sum_{u=1}^{\s{U}}\left\|P_{u} - Q_{u}\right\|_{1}\right)^{2}}{\sqrt{m n} \cdot\sum_{u=1}^{\s{U}}\left\|P_{u} - Q_{u}\right\|_{1}+2 \s{U}n+\s{U}n} \\
&\geq \min \left(\frac{m^{3 / 2} \sum_{u=1}^{\s{U}}\left\|P_{u} - Q_{u}\right\|_{1} }{2 \s{U}\sqrt{n}}, \frac{m^{2}\left(\sum_{u=1}^{\s{U}}\left\|P_{u} - Q_{u}\right\|_{1}\right)^{2}}{6\s{U}n}\right),
\end{align}
where the first inequality follows by the trivial bound $\max(a,b)\leq a+b$, for any two non-negative numbers $a$ and $b$.
\end{proof} 

Applying Lemma~\ref{utlizem>n} on \eqref{eqn:lem91}--\eqref{eqn:lem93}, we obtain the following corollary for $m \geq n$.
\begin{corollary}\label{cor:1}
   For $m \geq n$, the following hold with probability at least $1-\delta_1$,
   \begin{align}
       &\mathbb{E}\left[\left.G \vphantom{\hat{f}_{u,i}}\right|\hat{f}_{u,i},u\in[\s{U}],i\in[n]\right]\nonumber\\
       &\quad\quad\geq \frac{\delta_1}{c_1}\min \left(\frac{m^{3 / 2} \sum_{u=1}^{\s{U}}\left\|P_{u} - Q_{u}\right\|_{1} }{2 \s{U}\sqrt{n}}, \frac{m^{2}\left(\sum_{u=1}^{\s{U}}\left\|P_{u} - Q_{u}\right\|_{1}\right)^{2}}{6\s{U}n}\right),\\
       &\mathbb{E}\left[\left.G \vphantom{\hat{f}_{u,i}}\right|\hat{f}_{u,i},u\in[\s{U}],i\in[n]\right]\leq \frac{c_{2}}{\delta_1}\sum_{u=1}^{\s{U}} \frac{m^{3 / 2}\|P_{u}-Q_{u}\|_{1}}{n^{\frac{1}{2}}},
   \end{align}
   and
   \begin{align}
      \s{Var}\left[\left.G \vphantom{\hat{f}_{u,i}}\right|\hat{f}_{u,i},u\in[\s{U}],i\in[n]\right]  \leq \frac{c_{3}}{\delta_1}\frac{10\s{U}m^{2}}{n}.
   \end{align}
\end{corollary}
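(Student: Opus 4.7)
The plan is essentially a direct substitution: Corollary~\ref{cor:1} is the specialization of the three generic bounds in Lemma~\ref{lem:9} to the regime $m \geq n$, obtained by plugging in the three case-specific estimates just established in Lemma~\ref{utlizem>n}. No new probabilistic argument is needed beyond what is already captured in Lemma~\ref{lem:9}, since the only randomness entering the statement comes through the $\widehat{f_{u,i}}$'s, whose fluctuations have already been controlled via Markov's inequality in the proof of Lemma~\ref{lem:9}.

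Concretely, I would condition on the intersection of the three high-probability events produced by Lemma~\ref{lem:9}, and then pair each generic bound with its matching deterministic estimate from Lemma~\ref{utlizem>n}. For item~(1), I would substitute the third inequality of Lemma~\ref{utlizem>n} (the lower bound on $m^{2}\bigl(\sum_{u}\|P_u-Q_u\|_1\bigr)^{2}/\sum_{u,i} f_{u,i}$) directly into the right-hand side of \eqref{eqn:lem91}, which yields exactly the claimed minimum of the two terms together with the prefactor $\delta_1/c_1$. For item~(2), I would sum the second inequality of Lemma~\ref{utlizem>n} over $u \in [\s{U}]$ and substitute the result into \eqref{eqn:lem92}. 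For item~(3), I would substitute the first inequality of Lemma~\ref{utlizem>n} directly into \eqref{eqn:lem93}, giving the prefactor $c_3/\delta_1$ times $10\s{U}m^{2}/n$.

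No step poses a genuine obstacle — the analytical work has already been completed in Lemmas~\ref{lem:9} and~\ref{utlizem>n}, and the corollary is obtained by a line-by-line pairing of lower/upper bounds. The only bookkeeping subtlety is the probability budget: Lemma~\ref{lem:9} produces three events each of probability at least $1-\delta_1$, whose intersection (via a union bound) has probability only $1-3\delta_1$. To state the corollary with probability exactly $1-\delta_1$ one should therefore apply Lemma~\ref{lem:9} with $\delta_1/3$ in place of $\delta_1$, which merely inflates the universal constants $c_1,c_2,c_3$ by a factor of $3$ and otherwise leaves the bounds intact.
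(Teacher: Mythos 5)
Your proposal is correct and is essentially the paper's own argument: the corollary is obtained exactly by substituting the three $m\geq n$ estimates of Lemma~\ref{utlizem>n} into the bounds \eqref{eqn:lem91}--\eqref{eqn:lem93} of Lemma~\ref{lem:9}, with no additional probabilistic work. Your remark on the probability budget (the joint event only has probability $1-3\delta_1$ unless one reruns Lemma~\ref{lem:9} with $\delta_1/3$, absorbing the factor into $c_1,c_2,c_3$) is a valid tightening of a point the paper itself glosses over.
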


Next, we move froward to the case where $m \leq n$. We have the following result.
\begin{lemma} For $m \leq n$, the following hold,
\begin{align}
    &\sum_{i=1}^{n} \frac{\s{Var}\left(G_{u,i}\right)}{f_{u,i}^{2}} \leq 24 m,\label{eqn:firstIneq11}\\
    &\sum_{i=1}^{n} \frac{m^{2}\left(P_{u,i}-Q_{u,i}\right)^{2}}{f_{u,i}} \leq m\|P_{u}-Q_{u}\|_{1},\label{eqn:firstIneq12}
\end{align}
and
\begin{align}
    \frac{m^{2}\left(\sum_{u=1}^{\s{U}}\left\|P_{u} - Q_{u}\right\|_{1}\right)^{2}}{\sum_{u=1}^{\s{U}}\sum_{i=1}^{n}f_{u,i}} \geq \frac{m^{2}\left(\sum_{u=1}^{\s{U}}\left\|P_{u} - Q_{u}\right\|_{1}\right)^{2}}{3 n}.\label{eqn:firstIneq13}
\end{align}
\label{utlizem<n}
\end{lemma}

\begin{proof}[Proof of Lemma \ref{utlizem<n}]
As before, we start by proving \eqref{eqn:firstIneq11}. We have,
\begin{align}
\sum_{i=1}^{n} \frac{\s{Var}\left(G_{u,i}\right)}{f_{u,i}^{2}} &=\sum_{i=1}^{n} \frac{4m^3(P_{u,i}-Q_{u,i})^2(P_{u,i}+Q_{u,i})+2m^2(P_{u,i}+Q_{u,i})^{2}}{f_{u,i}^{2}} \\
&=\sum_{i=1}^{n} \frac{4m^3(P_{u,i}-Q_{u,i})^2(P_{u,i}+Q_{u,i})+2m^2(P_{u,i}+Q_{u,i})^{2}}{\left(\max \left\{m \cdot\left(P_{u,i}+Q_{u,i}\right), 1\right\}\right)^{2}} \\
& \leq \sum_{i=1}^{n} \frac{4 m^{3}\left(P_{u,i}-Q_{u,i}\right)^{2}\left(P_{u,i}+Q_{u,i}\right)+4 m^{2}\left(P_{u,i}-Q_{u,i}\right)^{2}+8 m^{2} Q_{u,i}^{2}}{\max \left\{m^{2} \cdot\left(P_{u,i}+Q_{u,i}\right)^{2}, 1\right\}}\\
& \leq \sum_{i=1}^{n} \frac{4 m^{3}\left(P_{u,i}-Q_{u,i}\right)^{2}\left(P_{u,i}+Q_{u,i}\right)}{m^{2} \cdot\left(P_{u,i}+Q_{u,i}\right)^{2}}+\sum_{i=1}^{n} \frac{4 m^{2}\left(P_{u,i}-Q_{u,i}\right)^{2}}{m \cdot\left(P_{u,i}+Q_{u,i}\right)}\nonumber\\
&\ \ \ \ +\sum_{i=1}^{n} \frac{8 m^{2} Q_{u,i}^{2}}{\max \left\{m^{2}\left(P_{u,i}+Q_{u,i}\right)^{2}, 1\right\}} \\
&\leq 4 m \sum_{i=1}^{n}\left|P_{u,i}-Q_{u,i}\right|+4 m \sum_{i=1}^{n}\left|P_{u,i}-Q_{u,i}\right|+\sum_{i=1}^{n} \frac{8 m^{2} Q_{u,i}^{2}}{\max \left\{m\left(P_{u,i}+Q_{u,i}\right), 1\right\}}\\
& \leq 8 m\|P_{u}+Q_{u}\|_{1}+\sum_{i=1}^{n} 8 m Q_{u,i} \\
& \leq 24 m,
\end{align}
where the first inequality follows from the fact that $(a+b)^2\leq 2(a-b)^2+4b^2$, for any $a,b\geq0$, the second inequality follows by lower bounding the denominator by individual terms in the maximum, and the third inequality follows from the trivial bound $\frac{|a-b|}{a+b}\leq1$, for $a,b\geq0$. Next, for \eqref{eqn:firstIneq12}, we note that,
\begin{align}
\sum_{i=1}^{n} \frac{m^{2}\left(P_{u,i}-Q_{u,i}\right)^{2}}{{f_{u,i}}}&=\sum_{i=1}^{n} \frac{m^{2}\left|P_{u,i}-Q_{u,i}\right|^{2}}{\max \left\{m \cdot\left(P_{u,i}+Q_{u,i}\right), 1\right\}}\\ & \leq \sum_{i=1}^{n} m\left|P_{u,i}-Q_{u,i}\right|=m\|P_{u}-Q_{u}\|_{1}.
\end{align}
Finally, we prove \eqref{eqn:firstIneq13}. We have,
\begin{align}
\frac{m^{2}\left(\sum_{u=1}^{\s{U}}\left\|P_{u} - Q_{u}\right\|_{1}\right)^{2}}{\sum_{u=1}^{\s{U}}\sum_{i=1}^{n}f_{u,i}}&=\frac{m^{2}\left(\sum_{u=1}^{\s{U}}\left\|P_{u} - Q_{u}\right\|_{1}\right)^{2}}{\sum_{u=1}^{\s{U}}\sum_{i=1}^{n} \max \left\{m \cdot\left(P_{u,i}+Q_{u,i}\right), 1\right\}}\\ 
&\geq\frac{m^{2}\left(\sum_{u=1}^{\s{U}}\left\|P_{u} - Q_{u}\right\|_{1}\right)^{2}}{\sum_{u=1}^{\s{U}}\left(m \cdot\left\|P_{u}+Q_{u}\right\|_1+L\right)}\\
&=\frac{m^{2}\left(\sum_{u=1}^{\s{U}}\left\|P_{u} - Q_{u}\right\|_{1}\right)^{2}}{2 m+n}\\ &\geq\frac{m^{2}\left(\sum_{u=1}^{\s{U}}\left\|P_{u} - Q_{u}\right\|_{1}\right)^{2}}{3 n}.
\end{align}
This concludes the proof.
\end{proof} 

Applying Lemma~\ref{utlizem<n} on \eqref{eqn:lem91}--\eqref{eqn:lem93}, we obtain the following corollary for $m \leq n$.
\begin{corollary}\label{cor:2}
    For $m \leq n$, the following hold with probability at least $1-\delta_1$,
    \begin{align}
        &\mathbb{E}\left[\left.G \vphantom{\hat{f}_{u,i}}\right|\hat{f}_{u,i},u\in[\s{U}],i\in[n]\right] \geq \frac{\delta_1}{c1}\frac{m^{2}\left(\sum_{u=1}^{\s{U}}\left\|P_{u} - Q_{u}\right\|_{1}\right)^{2}}{3 n},\label{eqn:Item11}\\
        &\mathbb{E}\left[\left.G \vphantom{\hat{f}_{u,i}}\right|\hat{f}_{u,i},u\in[\s{U}],i\in[n]\right]\leq \frac{c_{2}}{\delta_1}\sum_{u=1}^{\s{U}}  m\|P_{u}-Q_{u}\|_{1},\label{eqn:Item12}\\
        &\s{Var}\left[\left.G \vphantom{\hat{f}_{u,i}}\right|\hat{f}_{u,i},u\in[\s{U}],i\in[n]\right]  \leq  \frac{24m\s{U}c_{3}}{\delta_1},\label{eqn:Item13}
    \end{align}
    and
    \begin{align}
   \s{Var}\left[\left.G \vphantom{\hat{f}_{u,i}}\right|\hat{f}_{u,i},u\in[\s{U}],i\in[n]\right]  &\leq \frac{1}{40}\left(\mathbb{E}\left[\left.G \vphantom{\hat{f}_{u,i}}\right|\hat{f}_{u,i},u\in[\s{U}],i\in[n]\right]\right)^{2}\nonumber\\
    &\ +324 \mathbb{E}\left[\left.G \vphantom{\hat{f}_{u,i}}\right|\hat{f}_{u,i},u\in[\s{U}],i\in[n]\right]+648 m^{2}\sum_{u=1}^{\s{U}}\|Q_{u}\|_{2}^{2}.\label{eqn:Item14}
\end{align}
\end{corollary}

\begin{proof}[Proof of Corollary~\ref{cor:2}]
Inequalities \eqref{eqn:Item11}--\eqref{eqn:Item13} follow almost directly from Lemma~\ref{utlizem<n}, and we next focus on \eqref{eqn:Item14}. First, note that
\begin{align}
\s{Var}\left[\left.G \vphantom{\hat{f}_{u,i}}\right|\hat{f}_{u,i}  \right] &=  \sum_{u=1}^{\s{U}}\sum_{i=1}^{n} \frac{\s{Var}\left(G_{u,i}\right)}{\hat{f}_{u,i} ^{2}}\\
&=\sum_{u=1}^{\s{U}}\sum_{i=1}^{n} \frac{4m^3(P_{u,i}-Q_{u,i})^2(P_{u,i}+Q_{u,i})+2m^2(P_{u,i}+Q_{u,i})^{2}}{f_{u,i}^{2}} \\
& \stackrel{(a)}{\leq} 4 m^{3}\left(\sum_{u=1}^{\s{U}}\sum_{i=1}^{n} \frac{\left(P_{u,i}-Q_{u,i}\right)^{4}}{\hat{f}_{u,i}^{2}}\right)^{\frac{1}{2}}\left(\sum_{u=1}^{\s{U}}\sum_{i=1}^{n} \frac{\left(P_{u,i}+Q_{u,i}\right)^{2}}{\hat{f}_{u,i}^{2}}\right)^{\frac{1}{2}}\nonumber\\
&\hspace{3cm}+\sum_{u=1}^{\s{U}}\sum_{i=1}^{n} \frac{2 m^{2}\left(P_{u,i}+Q_{u,i}\right)^{2}}{\hat{f}_{u,i}^{2}} \\
& \stackrel{(b)}{\leq} 4 m^{3}\left(\sum_{u=1}^{\s{U}}\sum_{i=1}^{n} \frac{\left(P_{u,i}-Q_{u,i}\right)^{2}}{\hat{f}_{u,i}}\right)\left(\sum_{u=1}^{\s{U}}\sum_{i=1}^{n} \frac{\left(P_{u,i}+Q_{u,i}\right)^{2}}{\hat{f}_{u,i}^{2}}\right)^{\frac{1}{2}}\nonumber\\
&\hspace{3cm}+\sum_{u=1}^{\s{U}}\sum_{i=1}^{n} \frac{2 m^{2}\left(P_{u,i}+Q_{u,i}\right)^{2}}{\hat{f}_{u,i}^{2}} \\
&=4\left(m^{2} \sum_{u=1}^{\s{U}}\sum_{i=1}^{n} \frac{\left(P_{u,i}-Q_{u,i}\right)^{2}}{\hat{f}_{u,i} }\right)\left(m^{2} \sum_{u=1}^{\s{U}}\sum_{i=1}^{n} \frac{\left(P_{u,i}+Q_{u,i}\right)^{2}}{\hat{f}_{u,i}^{2}}\right)^{\frac{1}{2}}\nonumber\\
&\hspace{3cm}+\sum_{u=1}^{\s{U}}\sum_{i=1}^{n} \frac{2 m^{2}\left(P_{u,i}+Q_{u,i}\right)^{2}}{\hat{f}_{u,i}^{2}},
\end{align}
where (a) follows from the Cauchy-Schwartz inequality, and (b) is due to the monotonicity of the $\ell_{p}$ norm, i.e., for any vector $u$, $\|u\|_{2} \leq\|u\|_{1}$. Then,
\begin{align}
\s{Var}\left[\left.G \vphantom{\hat{f}_{u,i}}\right|\widehat{{f}_{u,i}}  \right] &=4\left(\mathbb{E}\left[\left.G \vphantom{\hat{f}_{u,i}}\right|\widehat{{f}_{u,i}}  \right]\right)\left(  \sum_{u=1}^{\s{U}}\sum_{i=1}^{n} \frac{m^{2}\left(P_{u,i}+Q_{u,i}\right)^{2}}{\hat{f}_{u,i}^{2}}\right)^{\frac{1}{2}}+\sum_{u=1}^{\s{U}}\sum_{i=1}^{n} \frac{2 m^{2}\left(P_{u,i}+Q_{u,i}\right)^{2}}{\hat{f}_{u,i}^{2}} \\
& \stackrel{(a)}{\leq} \frac{1}{40}\left(\mathbb{E}\left[\left.G \vphantom{\hat{f}_{u,i}}\right|\widehat{{f}_{u,i}}  \right]\right)^{2}+(160+2)   \sum_{u=1}^{\s{U}}\sum_{i=1}^{n} \frac{m^{2}\left(P_{u,i}+Q_{u,i}\right)^{2}}{\widehat{{f}_{u,i}}} \\
& \stackrel{(\mathrm{b})}{\leq} \frac{1}{40}\left(\mathbb{E}\left[\left.G \vphantom{\hat{f}_{u,i}}\right|\widehat{{f}_{u,i}}  \right]\right)^{2}+162   \sum_{u=1}^{\s{U}}\sum_{i=1}^{n} \frac{2 m^{2}\left(P_{u,i}-Q_{u,i}\right)^{2}}{\hat{f}_{u,i}^{2}}+162   \sum_{u=1}^{\s{U}}\sum_{i=1}^{n} \frac{4 m^{2} Q_{u,i}^{2}}{\hat{f}_{u,i}^{2}} \\
& \stackrel{(\mathrm{c})}{\leq} \frac{1}{40}\left(\mathbb{E}\left[\left.G \vphantom{\hat{f}_{u,i}}\right|\widehat{{f}_{u,i}}  \right]\right)^{2}+324   \sum_{u=1}^{\s{U}}\sum_{i=1}^{n} \frac{m^{2}\left(P_{u,i}-Q_{u,i}\right)^{2}}{\hat{f}_{u,i}}+648   \sum_{u=1}^{\s{U}}\sum_{i=1}^{n} m^{2} Q_{u,i}^{2} \\
&=\frac{1}{40}\left(\mathbb{E}\left[\left.G \vphantom{\hat{f}_{u,i}}\right|\widehat{{f}_{u,i}}  \right]\right)^{2}+324 \mathbb{E}\left[\left.G \vphantom{\hat{f}_{u,i}}\right|\widehat{{f}_{u,i}}  \right]+648 m^{2}\sum_{u=1}^{\s{U}}\|Q_{u}\|_{2}^{2}\\&=\frac{1}{40}\left(\mathbb{E}\left[\left.G \vphantom{\hat{f}_{u,i}}\right|\hat{f}_{u,i},u\in[\s{U}],i\in[n]\right]\right)^{2}+324 \mathbb{E}\left[\left.G \vphantom{\hat{f}_{u,i}}\right|\hat{f}_{u,i},u\in[\s{U}],i\in[n]\right]\nonumber\\
&\hspace{1cm}+648 m^{2}\sum_{u=1}^{\s{U}}\|Q_{u}\|_{2}^{2},
\end{align}
where in (a) we use the fact that $2 a b \leq a^{2}+b^{2}$, (b) follows from $(a+b)^{2} \leq 2(a-b)^{2}+4 b^{2}$, and finally (c) is because $\hat{f}_{u,i} \geq 1$.    
\end{proof}

\subsubsection{Proof of Theorem \ref{theorem:iid_sample_complexity}.}

We start with the case where $m \geq n$. By Chebyshev's inequality,
\begin{align}
    \pr\left(\left.|G- \mu | \leq \frac{\sigma}{\sqrt{\delta}} \vphantom{\hat{f}_{u,i}}\right|\hat{f}_{u,i},u\in[\s{U}],i\in[n]\right) \geq 1- \delta,
\end{align} 
where 
\begin{align}
\mu&=\mathbb{E}\left[\left.G \vphantom{\hat{f}_{u,i}}\right|\widehat{{f}_{u,i}} \text { for } u\in[\s{U}], i \in[n]\right],\\
\sigma^2&=\s{Var}\left[\left.G \vphantom{\hat{f}_{u,i}}\right|\widehat{{f}_{u,i}} \text { for } u\in[\s{U}],i \in[n]\right].
\end{align}
Accordingly, using Corollary~\ref{cor:1}, we get that with probability at least $1-\delta$,
\begin{align}
 \frac{\delta_1}{c_1}\min \left(\frac{m^{3 / 2} \sum_{u=1}^{\s{U}}\left\|P_{u} - Q_{u}\right\|_{1} }{2 \s{U}\sqrt{n}}, \frac{m^{2}\left(\sum_{u=1}^{\s{U}}\left\|P_{u} - Q_{u}\right\|_{1}\right)^{2}}{6\s{U}n}\right) - m\sqrt{\frac{10c_{3}\s{U}}{\delta\delta_1 n}} \leq G, \label{lower bound of G first}
\end{align}
and
\begin{align}
G \leq\frac{c_{2}}{\delta_1}\sum_{u=1}^{\s{U}} \frac{m^{3 / 2}\|P_{u}-Q_{u}\|_{1}}{n^{\frac{1}{2}}} + m\sqrt{\frac{10c_{3}\s{U}}{\delta\delta_1 n}}. \label{upper bound of G 1}
\end{align}
We will consider the two possible cases where either $\sum_{u=1}^{\s{U}}\|P_{u}-Q_{u}\|_{1}\geq \varepsilon_{2}\s{U}$ or $\sum_{u=1}^{\s{U}}\|P_{u}-Q_{u}\|_{1}\leq \varepsilon_{1}\s{U}$. Starting with the former, the lower bound in (\ref{lower bound of G first}) reduces to
\begin{align}
G &\geq   \frac{\delta_1}{c_1}\min \left(\frac{m^{3 / 2} \varepsilon_{2}\s{U} }{2 \s{U}\sqrt{n}}, \frac{m^{2}\varepsilon_{2}^{2}\s{U}^{2}}{6\s{U}n}\right) - m\sqrt{\frac{10c_{3}\s{U}}{\delta\delta_1 n}} \\
&\geq \frac{\delta_1}{12c_{1}} \min \left(\frac{m^{3 / 2} \varepsilon_{2}}{\sqrt{n}}, \frac{m^{2} \varepsilon_{2}^{2}\s{U}}{n}\right),
\end{align}
where the second inequality holds as long as $m \geq \max
\left\{\frac{12}{5}^2\frac{10c_3c_1^2}{\delta_1^2}\frac{\s{U}}{\delta\varepsilon_2^2},\frac{12c_1}{\delta_1}\sqrt{\frac{10c_3}{\delta_1}}\sqrt{\frac{n}{\s{U}\delta\varepsilon_{2}^{4}}}\right\} = \s{C}\max \left\{\frac{\s{U}}{\delta\varepsilon_2^2},\sqrt{\frac{n}{\s{U}\delta\varepsilon_{2}^{4}}}\right\}$, for some constant $\s{C}>0$. Therefore, with probability at least $1-\delta$ the test announces that $\sum_{u=1}^{\s{U}}\|P_{u}-Q_{u}\|_{1} \geq \s{U}\varepsilon_{2}$ correctly.

On the other hand, in the case where $\sum_{u=1}^{\s{U}}\|P_{u}-Q_{u}\|_{1} \leq \varepsilon_{1}\s{U}$, using Corollary~\ref{cor:1} once again, we get that with probability at least $1-\delta$,
\begin{align}
G &\leq \frac{c_{2}}{\delta_1}\frac{m^{3 / 2}\varepsilon_{1}\s{U}}{\sqrt{n}}+m\sqrt{\frac{10c_{3}\s{U}}{\delta\delta_1 n}} \\
&\leq \frac{\delta_1}{24c_{1}} \min \left(\frac{m^{3 / 2} \varepsilon_{2}}{\sqrt{n}}, \frac{m^{2} \varepsilon_{2}^{2}\s{U}}{n}\right),
\end{align}
where the second inequality holds as long as $m>\max\left(\frac{\delta_1 \varepsilon_2}{c_{2}\varepsilon_1} \sqrt{\frac{1}{\s{U}\delta}},\frac{c_2^2n\varepsilon_1^2}{\delta_1^2\varepsilon_2^4}\right)=\s{\tilde{C}}\max\left(\frac{ \varepsilon_2}{\varepsilon_1} \sqrt{\frac{1}{\s{U}\delta}},\frac{n\varepsilon_1^2}{\varepsilon_2^4}\right)$, for some constant $\tilde{\s{C}}>0$. Therefore, with probability at least $1-\delta$ the test announces that
$\sum_{u=1}^{\s{U}}\|P_{u}-Q_{u}\|_{1}\leq \varepsilon_{1}$ correctly. 

Finally, we turn to the case $m \leq n$. Using Corollary~\ref{cor:2} and repeating the same arguments above, we get that the test announces $\sum_{u=1}^{\s{U}}\|P_{u}-Q_{u}\|_{1}\geq \varepsilon_{2}\s{U}$ correctly if $m\geq n\frac{\varepsilon_1}{\varepsilon_2^2}$, while announces $\sum_{u=1}^{\s{U}}\|P_{u}-Q_{u}\|_{1}\leq \varepsilon_{1}\s{U}$ correctly, as long as $m\geq\frac{n^{2/3}}{\s{U}\varepsilon_2^{4/3}}$. This concludes the proof.

\subsection{Proof of Theorem~\ref{lemma:m-Jointly-k-cover time and tolerant-closeness-testing Markov chains}}



We start with the following lemma generalizes the ``exponential decay lemma" in \cite{pmlr-v132-chan21a} for the case where we have $m$ independent Markov chains. This result will be essential in the proof of Theorem~\ref{lemma:m-Jointly-k-cover time and tolerant-closeness-testing Markov chains}. For simplicity of notations, we will sometimes suppress the dependency of the covering quantities on $\mathscr{M}$. 

\begin{lemma}[Exponential decay]\label{lemma:m-Joint-k-Cover Exponential Decay} For $\s{M}$ independent irreducible Markov chains $\{\s{Z}_{m,1}^\infty\}_{m=1}^{\s{M}}$, on the same state space $[n]$, for any $k,\s{L}\in\mathbb{N}$, and any initial distribution $\mathbf{q}$ over $[n]^\s{M}$, 
we have 
\begin{align}
    \mathbb{P}\left(\tau_{\s{cov}}^{(k)}(m) \geq e \s{L} t_{\s{cov}}^{(k)}(m;\mathscr{M})\right) \leq e^{-\s{L}}.
\end{align}
\end{lemma}

\begin{proof}[Proof of \text{Lemma} \ref{lemma:m-Joint-k-Cover Exponential Decay}]
Consider $\tau_{\s{cov}}^{(k)}(\s{M};\mathscr{M})$ with any fixed starting states $\s{Z}_{\ell,1}=v_\ell$, for $\ell\in[\s{M}]$. By Markov's inequality, 
\begin{align}
\label{eq:Markov's inequality}
\mathbb{P}(\tau_{\s{cov}}^{(k)}(\s{M}) \geq e t_{\s{cov}}^{(k)}(\s{M};\mathscr{M})) &\leq \mathbb{P}\left(\tau_{\s{cov}}^{(k)}(\s{M}) \geq e \mathbb{E}\left[\left.\tau_{\s{cov}}^{(k)}(\s{M}) \right| \s{Z}_{\ell,1}=v_\ell,\;\forall j\in[\s{M}]\right]\right) \\
&\leq e^{-1}.
\end{align}
Note that this inequality holds for any initial states $\mathbf{v}=\left(v_1,\dots,v_{\s{M}}\right)\sim\mu$, where $\mu$ is any discrete distribution over $[n]^\s{M}$. We next analyze sub-trajectories of our Markov chain of length $\nu\triangleq e t_{\s{cov}}^{(k)}(\s{M};\mathscr{M})$. Specifically, for any $1\leq\ell\leq \s{L}$, we define $\calE_\ell$ as the event that the set of $\s{M}$ sub-trajectories of the Markov chains $\{\s{Z}_{m,(\ell-1)\nu+1}^{\ell\nu}\}_{m=1}^{\s{M}}$ jointly cover the state space $k$ times. According to \eqref{eq:Markov's inequality}, we have 
\begin{align}
\mathbb{P}\left(\calE_1^{c}\right)&=\mathbb{P}\left(\tau_{\s{cov}}^{(k)}(\s{M}) \geq e t_{\s{cov}}^{(k)}(\s{M};\mathscr{M})\right)\leq e^{-1}.
\end{align}
Denote the distribution of $\left\{\s{Z}_{m,\nu}\right\}_{m\in[\s{M}]}$ conditioned on $\calE_1^{c}$ by $\mu^{\prime}$, and let $\tau_{\s{cov}}^{(k)^{\prime}}(\s{M})$ be the $\s{M}$-joint $k$-cover time of $\{\s{Z}_{m,\nu+1}^{\infty}\}_{m=1}^{\s{M}}$. We have,
\begin{align}
\mathbb{P}\left(\calE_2^{c} \vert \calE_1^{c}\right)&=\mathbb{P}\left(\left.\tau_{\s{cov}}^{(k)^{\prime}}(\s{M}) \geq e t_{\s{cov}}^{(k)}(\s{M}) \right| \tau_{\s{cov}}^{(k)}(\s{M}) \geq\right.\left.e t_{\s{cov}}^{(k)}(\s{M};\mathscr{M})\right)\\
&=\mathbb{P}\left(\left.\tau_{\s{cov}}^{(k)^{\prime}}(\s{M}) \geq e t_{\s{cov}}^{(k)}(\s{M};\mathscr{M}) \right| \left\{\s{Z}_{m,\nu}\right\}_{m\in[\s{M}]} \sim \mu^{\prime}\right) \leq e^{-1},
\end{align} 
where we used the fact that $\calE_1$ is determined by $\{\s{Z}_{m,1}^{\nu}\}_{m\in[\s{M}]}$, and that by the Markov property the event $\calE_2$ do not depend on $\{\s{Z}_{m,1}^{\nu-1}\}_{m\in[\s{M}]}$. Thus, 
\begin{align}
\mathbb{P}\left(\calE_1^{c} \cap \calE_2^{c}\right)=\mathbb{P}\left(\calE_1^{c}\right) \mathbb{P}\left(\calE_2^{c} \mid \calE_1^{c}\right) \leq e^{-2}.
\end{align}
Using the same arguments above, by induction, we can show that
$\mathbb{P}\left(\cap_{i \in[\s{L}]} \calE_{i}^{c}\right) \leq e^{-\s{L}}$. Define $\calE$ as the event that the set of $\s{M}$ sub-trajectories of the Markov chains $\{\s{Z}_{m,1}^{\s{L}\nu}\}_{m=1}^{\s{M}}$ jointly cover the state space $k$ times. Then, it is clear that $\cup_{i \in[\s{L}]} \calE_{i}\subseteq\calE$, and thus, $\mathbb{P}\left(\calE^{c}\right) \leq \mathbb{P}\left(\cap_{i \in[\s{L}]} \calE_{i}^{c}\right) \leq e^{-\s{L}}$, which concludes the proof.
\end{proof}
We are now in a position to prove Theorem \ref{lemma:m-Jointly-k-cover time and tolerant-closeness-testing Markov chains}. Recall that $\bar{m}=m(n,\varepsilon_1,\varepsilon_2,\delta/4n)$ is the sample complexity guarantee associated with Algorithm~\ref{algo:tester_HT_4}. For $u\in[\s{U}]$ and $\ell\in\mathbb{N}$ we define the events,
\begin{align}
\calE_{\s{R},u}(\ell)&\triangleq\left\{\sum_{m=1}^{\s{M}}\calN_{i}^{\f{x}_{m,u}^{\s{R}}}(\ell)\geq \bar{m},\;\forall i\in[n]\right\},\label{eqn:ERu}\\
\calE_{\s{F},u}(\ell)&\triangleq\left\{\sum_{m=1}^{\s{M}}\calN_{i}^{\f{x}_{m,u}^{\s{F}}}(\ell)\geq \bar{m},\;\forall i\in[n]\right\}.\label{eqn:EFu}
\end{align}
Furthermore, for any $i\in[n]$, we define $\tilde\calE_i$ as the event that steps 6-8 in Algorithm~\ref{algo:TolerantClosenessTestingRegu} return ``\texttt{NO}", namely, that the first $\bar{m}$ succeeding samples of state $i\in[n]$ in the union of the $\s{M}$ Markov trajectories do not pass the i.i.d. tester in Algorithm~\ref{algo:tester_HT_4}. 

To establish Theorem~\ref{lemma:m-Jointly-k-cover time and tolerant-closeness-testing Markov chains}, we consider the two possible cases where $\mathbb{V}_{\s{filter}}\leq \varepsilon_1$, and the complementary case where $\mathbb{V}_{\s{filter}}\geq \varepsilon_2$. Starting with the former, according to Lemma~\ref{lemma:m-Joint-k-Cover Exponential Decay}, by taking $\s{L}=\log \frac{4\s{U}}{\delta}$, we get $\mathbb{P}\left(\tau_{\s{cov}}^{(\bar{m})}(\s{M};\s{Q}_u^{\s{F}}) \geq e t_{\s{cov}}^{(\bar{m})}(\s{M};\s{Q}_u^{\s{F}}) \log \frac{4\s{U}}{\delta}\right) \leq \frac{\delta}{4\s{U}}$ and $\mathbb{P}\left(\tau_{\s{cov}}^{(\bar{m})}(\s{M};\s{P}_u^{\s{R}}) \geq e t_{\s{cov}}^{(\bar{m})}(\s{M};\s{P}_u^{\s{R}}) \log \frac{4\s{U}}{\delta}\right) \leq \frac{\delta}{4\s{U}}$, for all $u\in[\s{U}]$. Thus, for a length $\ell=\s{T}$ trajectory in \eqref{eqn:ERu}--\eqref{eqn:EFu}, where $\s{T}$ is as stated in Theorem~\ref{lemma:m-Jointly-k-cover time and tolerant-closeness-testing Markov chains}, i.e.,
\begin{align}
    \s{T}=e \log \frac{4\s{U}}{\delta} \max_{u\in[\s{U}]}\max_{\s{W}\in\{\s{Q}_u^{\s{F}},\s{P}_u^{\s{R}}\}}t_{\s{cov}}^{\bar{m}}\left(\s{M};\s{W}\right),
\end{align} 
we will have $\bar{m}$ samples for each state in $[n]$ with probability $\mathbb{P}\left(\calE_{\s{R},u}(\s{T})\right) \geq 1- \frac{\delta}{4\s{U}}$, for any $u\in[\s{U}]$. Similarly, $\mathbb{P}\left(\calE_{\s{F},u}(\s{T})\right) \geq 1-\frac{\delta}{4\s{U}}$, for any $u\in[\s{U}]$. Thus, by a union bound over the two chains and the set all users $u\in[\s{U}]$, the probability of passing the condition in step 5 of Algorithm \ref{algo:TolerantClosenessTestingRegu} is at least $\geq 1-\frac{\delta}{2}$. Furthermore, by the sample complexity guarantee in Theorem~\ref{theorem:iid_sample_complexity} associated with the i.i.d. tester in Algorithm~\ref{algo:tester_HT_4}, we have $\mathbb{P}(\tilde{\calE}_{i}) \leq \frac{\delta}{ 4n}$, implying that, $\mathbb{P}(\cup_{u\in\s{U}} \calE_{\s{R},u}^c(\s{T}) \cup \calE_{\s{F},u}^c(\s{T}) \cup \tilde{\calE}_1 \ldots \cup \tilde{\calE}_{n}) \leq \frac{3\delta}{4}$. Thus, with probability at least $1-\delta$, the tester will return \texttt{YES}.

Next, we move forward to the complementary case. Note that the only case the algorithm outputs \texttt{YES} is when it do not pass steps 4 and 7 in Algorithm \ref{algo:TolerantClosenessTestingRegu} for all states, which means it will have enough samples for testing each state, and the i.i.d. tester outputs \texttt{YES} for all sub-tests. Since $\mathbb{V}_{\s{filter}}\geq \varepsilon_2$ implies that there exists $i_\star \in[n]$ such that $\sum_{u=1}^{\s{U}} \left\|\mathbf{P}^{\f{R}}_{u}(i_\star)- \mathbf{Q}^{\f{F}}_{u}(i_\star) \right\|_{1}\geq\s{U}\varepsilon_2$, this guarantees that the sub-test for $i_\star$ will return \texttt{NO} with probability $\mathbb{P}(\tilde{\calE}_{i_\star}) \geq 1-\frac{\delta}{4n}$, again due to the sample complexity guarantees for the i.i.d. tester in Theorem~\ref{theorem:iid_sample_complexity}. Thus, the probability for the whole procedure outputting \texttt{YES} is $\mathbb{P}(\cap_{u\in[\s{U}]}\calE_{\s{R},u}(\s{T}) \cap \calE_{\s{F},u}(\s{T}) \cap \tilde{\calE}_1^{c} \ldots \cap \tilde{\calE}_{n}^{c}) \leq \mathbb{P}(\tilde{\calE}_{i_\star}^{c}) \leq \delta / 4 n$. 

Combining both cases above, it is clear that Algorithm~\ref{algo:TolerantClosenessTestingRegu} will output the correct answer with probability at least $1-\delta$.

\subsection{Proof of Lemma~\ref{$m$-joint-$k$-cover time upper bound}}

To prove Lemma~\ref{$m$-joint-$k$-cover time upper bound} we start with the following concentration bound on the random $\ell$-joint $k$-hitting time $\tau_{\s{hit}}^{(k)}(\ell;i)$, defined as the first time when a particular state $i\in[n]$ is visited $k$ times jointly by the $\ell$ Markov chains. Mathematically, we define 
\begin{align}
    \tau_{\s{hit}}^{(k)}(\ell;i)\triangleq\inf\ppp{t\geq0:\sum_{j=1}^{\ell}\calN_{i}^{\s{Z}_{j}}(t)\geq k},
\end{align}
for $i\in[n]$. The $\ell$-joint $k$-hitting time is then defined as
\begin{align}
    t_{\s{hit}}^{(k)}(\ell)\triangleq\max _{i_\in[n],\mathbf{v} \in[n]^\ell} \mathbb{E}\left[\left.\tau_{\s{hit}}^{(k)}(\ell;i) \right| \s{Z}_{1,1}=v_1,\s{Z}_{2,1}=v_2,...,\s{Z}_{\ell,1}=v_\ell\right].  
\end{align}
Finally, we define the $\ell$-joint return time. For some state $i\in[n]$, the random $\ell$-joint return time $\tau_{\s{ret}}(\ell;i)$ is the first time one of the $\ell$ Markov chains starting at $i$ return to $i$. The $\ell$-joint return time is then defined as $t_{\s{ret}}(\ell;i) = \bE[\tau_{\s{ret}}(\ell;i)\vert \s{Z}_{1,1}=i,\s{Z}_{2,1}=i,...,\s{Z}_{\ell,1}=i]$. It is standard result that for irreducible chains $t_{\s{ret}}(1;i) =1/\pi_i$, where $\pi$ is the stationary distribution.
\begin{lemma}\label{lem:concHit}
Let $\s{Z}_{1,1}^{\infty},\s{Z}_{2,1}^{\infty},...,\s{Z}_{\ell,1}^{\infty}$ be $\ell$-independent infinite trajectories drawn by the same Markov chain $\mathscr{M}$. Then, for any $i\in[n]$,
\begin{align}
    \pr\p{\tau_{\s{hit}}^{(k)}(\ell;i)\geq t}\leq \exp\p{-\frac{t}{e\gamma_i}},
\end{align}
for any $t\geq0$, where $\gamma_i\triangleq t_{\s{hit}}^{(1)}(\ell)+\frac{k}{\pi_i}$.
\end{lemma}

\begin{proof}[Proof of Lemma~\ref{lem:concHit}]
    First, we note that by the Markov property we have 
    \begin{align}
        \tau_{\s{hit}}^{(k)}(\ell;i) &= \tau_{\s{hit}}^{(1)}(\ell;i)+(k-1)\cdot t_{\s{ret}}(\ell;i)\\
        &\leq\tau_{\s{hit}}^{(1)}(\ell;i)+\frac{k}{\pi_i}\\
        &\leq t_{\s{hit}}^{(1)}(\ell)+\frac{k}{\pi_i} = \gamma_i,
    \end{align}
    where the last inequality holds by definition with probability one. Thus, by Markov inequality we get,
    \begin{align}
        \pr\pp{\tau_{\s{hit}}^{(k)}(\ell;i)\geq e\gamma_i}\leq \frac{1}{e}.
    \end{align}
    Using the same arguments as in the proof of the exponential decay result in Lemma~\ref{lemma:m-Joint-k-Cover Exponential Decay}, we can show that for any $\ell\geq1$,
    \begin{align}
        \pr\pp{\tau_{\s{hit}}^{(k)}(\ell;i)\geq e\kappa\gamma_i}\leq e^{-\kappa}.
    \end{align}
    Thus, the result follows by taking $\kappa = t/e\gamma_i$. 
\end{proof}
    Using the above result we are now in a position to prove Lemma~\ref{$m$-joint-$k$-cover time upper bound}. First, it is clear that $\tau_{\s{cov}}^{(k)}(\ell)\leq \max_{i\in[n]}\tau_{\s{hit}}^{(k)}(\ell;i)$. Thus, by the union bound and Lemma~\ref{lem:concHit}, we have for any $t\geq0$,
    \begin{align}
        \pr\pp{\tau_{\s{cov}}^{(k)}(\ell)\geq t}\leq \sum_{i\in[n]}\exp\p{-\frac{t}{e\gamma_i}}\leq n\exp\p{-\frac{t}{e\min_{i\in[n]}\gamma_i}}.
    \end{align}
    Therefore, we have,
    \begin{align}
        \bE[\tau_{\s{cov}}^{(k)}(\ell)] &= \int_0^\infty\pr\p{\tau_{\s{cov}}^{(k)}(\ell)\geq t}\mathrm{d}t\\
        &\leq \int_0^{e\min_{i\in[n]}\gamma_i\log n}\mathrm{d}t+n\int_{e\min_{i\in[n]}\gamma_i\log n}^\infty\exp\p{-\frac{t}{e\min_{i\in[n]}\gamma_i}}\mathrm{d}t\\
        & =e\min_{i\in[n]}\gamma_i\log n+e^2\min_{i\in[n]}\gamma_i.
    \end{align}
Thus, it follows that $t_{\s{cov}}^{(k)}(\ell) = \calO\p{\min_{i\in[n]}\gamma_i\log n} =\calO\p{t_{\s{hit}}^{(1)}(\ell)\log n+\frac{k\log n}{\pi_\star}}$, and therefore, $t_{\s{cov}}^{(k)}(\ell) = \calO\p{t_{\s{cov}}^{(1)}(\ell)\log n+\frac{k\log n}{\pi_\star}}$. Finally, combining Theorem 3.2, Lemma 4.3, and  Theorem 4.7 in \cite{rivera_sauerwald_sylvester_2023}, we have that,
\begin{align}
   t_{\s{cov}}^{(1)}(\ell)= \calO\p{\max\ppp{t_{\s{mix}},\frac{t_{\s{cov}}\log n}{\ell}}},
\end{align}
which concludes the proof.

\section{Conclusion and Future Research}

In this paper, we modeled the relationship between the three stakeholders: the platform, the users, and the auditor. The essence of the modeling is that from the auditor's perspective the platform is a content-generating system formulated by a multidimensional first order Markov chain (as the fixed number of pieces of the content appearing on each feed), where at every time step the platform samples a new feed, according to the Markov transition-matrix (conditional probability). We developed an auditing method that tests whether there are unexpected deviations in the user's decision-making process over a predefined time horizon. Unexpected deviations in the user's decision-making process might be a result of the selective filtering of the contents to be shown on the user's feed in comparison to what would be the users' decision-making process under natural filtering. We proposed also an auditing procedure for online counterfactual regulations.

There are several exciting directions for future work, including the following. A major goal going forward is to evince our auditing procedure on real social media content. Specifically, while our work propose a theoretical framework for SMP auditing, we left several fundamental questions that revolve around implementability, such as, how do we know if the framework is effective or useful? What are the metrics that should be used? We are currently investigating these kind of questions. From the technical perspective, there are many interesting generalizations an open questions that we plan to investigate. For example, studying a sequential version of the testing problems proposed in this paper are of particular importance. Indeed, in real-world platforms decisions must be taken as quickly as possible so that proper countermeasures can be taken to suppress regulation violation. Moreover, real-world networks are gigantic and therefore it is quite important to study the performance of low-complexity algorithms. Also, it is of both theoretical and practical importance to consider more general probabilistic models which will, for example, capture the dynamic relationships between users, the varying influence of individual users within the platform, and in general weaken some of the assumptions we made about the behaviour of users, the platform/algorithm, the social relationships and dynamics. It would be interesting to consider more complicated/real-world motivated generative models, such as, higher-order Markov chains, as well as simple structured dependencies among the $\s{M}$ contents. In this paper we considered testing against a single agree upon definition for a reference feed. However, there is more than one ``natural/fair" way to filter contents. Accordingly, it would be more robust and general to require closeness to the set of ``natural/fair" references, which may be very different from one another. From the conceptual perspective, while our paper propose several definition for the notions of ``variability" and ``violation", there probably are other possible definitions, which take into account some perspectives of responsible regulation which we ignored, and are important to investigate. 

Finally, it should be clear that each approach, worst-case \cite{cen2020regulating} or average, has its own advantages and disadvantages. For example, the worst-case approach might be sensitive to adversarial users; in real-world SMPs, where any party is free to create a user without any supervision, a set of adversarial users can act as more naive/gullible than the most gullible user already in existence, and thus fool the auditor. Also, the worst-case approach prevents all users from gradually changing their opinions. This is because, under this approach, the auditing process will immediately result in a violation when the most gullible user alters its opinion slightly. As a result, all other users will not have the opportunity to make slow and natural changes to their opinions, as they would with our average approach. In some sense, the above problematic issues are less severe/relevant in our average approach. In the average approach, the auditing procedure would not prevent the SMP from identifying a set $O(1)$-many users who, for example, are most likely to tip the outcome of an election and promoting one presidential candidate to them, while using the reference feed itself for the remaining users. However, if the set of chosen users is ``large enough", coupled with good-faith effort to choose and test features, this issue can be resolved. It is clear, nonetheless, that further research of both approaches (and perhaps others) is needed so as to achieve better understanding of this complicated problem of auditing/regulating SMPs.

\acks{This work was supported by the ISRAEL SCIENCE FOUNDATION (grant No. 1734/21).}

\appendix

\section{Detailed Construction of Our Framework}

In this appendix, we provide more detailed discussions about the framework, definitions, and assumptions presented in Section~\ref{section-Framework}.

\subsection{User-platform relationship}\label{subsec:user-platform}

\paragraph{Users.} We describe the \emph{users learning and decision-making pipeline}. As users browse through their feeds, they implicitly form internal \emph{beliefs} about the observed contents, and based on those beliefs they later take \emph{actions/decisions}. For example, how individuals vote or the products they buy are decisions that are affected by the content they see on social media. In addition, the decisions does not have to occur on the platform. For instance, the platform could show information on COVID-19, but the decision could be whether to get the vaccine. Let us formulate this mathematically. Let $\Omega$ be a compact metrizable set of possible states; this set can be finite, countably infinite, or continuums, and its elements $\omega\in\Omega$ can be either scalars or vectors. At each time step $t\geq1$, each user $u\in[\s{U}]$ is associated with a belief $\calB^{\s{F}}_{u,t}\in\Delta(\Omega)$, where $\Delta(\Omega)$ is the simplex of probability distributions over the state space. At start $t=0$, without loss of essential generality, we may assume that $\calB^{\s{F}}_{u,0}=\s{Uniform}(\Omega)$, for all $u\in[\s{U}]$. The belief $\calB^{\s{F}}_{u,t}$ is a posterior distribution on $\Omega$ \emph{conditioned} on the information available to user $u$ at time $t$. This information consists of the observed feeds $\{\bX^{\s{F}}_{u}(\ell)\}_{\ell\leq t}$. The total history information available to user $u$ at time $t$ by  $\s{H}^{\s{F}}_{t,u}\triangleq\ppp{\bX^{\s{F}}_{u}(\ell):\ell\leq t}$. Accordingly, user $u$'s belief at time $t$ is defined as $\calB^{\s{F}}_{u,t}(\mathrm{d}\omega,\s{h}_{t,u}) \triangleq \pr^{\s{F}}(\mathrm{d}\omega\vert\s{H}^{\s{F}}_{t,u}=\s{h}_{t,u})$, for a given sequence of feeds $\s{h}_{t,u}$. Based on the beliefs users take decisions (or, actions); each user have a set of possible \emph{actions} at time $t\geq0$. For user $u\in\s{U}$, let $\calA_u(t)$ denote a compact metrizable action space, and $\s{A}_{u,i}(t)\in\calA_u(t)$ be the $i$th action. Also, let $\calU_u:\Omega\times\calA_u\to\mathbb{R}$ be (possibly continuous) user $u$'s utility function. Consequently, for any belief $\calB^{\s{F}}_{u,t}\in\Delta(\Omega)$ and a utility function $\calU_u$ we define $\s{br}^{\s{F}}_{u,t}(\s{h}_{t,u})$ as the set of actions that maximizes user $u$'s expected utility, i.e.,
\begin{align*}
    \s{br}^{\s{F}}_{u,t}(\s{h}_{t,u})\triangleq\ppp{a\in\calA_u:a\in\argmax_{b\in\calA_u}\int_{\Omega}\calU_u(\omega,b)\calB^{\s{F}}_{u,t}(\mathrm{d}\omega,\s{h}_{t,u})}.   
\end{align*}

\subsection{Feeds construction and auditor-platform interaction}\label{subsec:RegplatRelation}
We now switch our focus to formalize the setup for the auditor-platform interaction.

\paragraph{Platform filtering.} An important component of our model is related to the question of \emph{how feeds are filtered}? As mentioned before, feeds are chosen by the platform using a black-box filtering algorithm, which is utilized to maximize a certain reward function. The filtering algorithm is fed with an extensive amount of inputs that the platform uses to filter, such as, current available contents, past feeds, users interaction history, users feedback (e.g., users ``sentiments" which are certain complex functions of the users beliefs), the users social network topology, and so on. The reward function reflects the platform's objective. For example, it may balance factors like advertising revenue, personalization, user engagement (e.g., the predicted number of clicks), content novelty, acquisition of new information about users, cost of operations, or a combination of these and other factors. We denote the platform's reward function by $\s{Rew}^{\s{F}}_t:\calX^\s{M}\times\calP_{t-1}\to\mathbb{R}$, where $\calP_{t-1}$ captures the inputs mentioned above, and accordingly, 
\begin{align}
\bX^{\s{F}}_{u}(t) = \argmax_{\f{x}\in\calX^\s{M}}\s{Rew}^{\s{F}}_t(\f{X},\calP_{u,t-1}),\label{eqn:rewarddef}
\end{align}
where, again, $\calP_{u,t-1}$ captures the platform external data used for filtering. For now, we leave both $\s{Rew}^{\s{F}}_t$ and $\calP_{t-1}$ unspecified.

\paragraph{Filtered vs. reference feeds.} The discussion in the background section about the regulation boundary and motivation suggests a neat and consistent formulation for the auditor's objective. Following \cite{wachter2019right,ghosh2019new,cen2023userdriven,petty2000marketing}, we define a reference (or, competitive) boundary that is formed based on the users consent, and its location is determined by domain experts. While user $u$'s filtered feed $\bX_{u}^{\s{F}}(t)$ at time $t$ is chosen by the platform in a certain reward-maximizing methodology,
On the other hand, the \emph{reference feeds} $\bX_u^{\s{R}}(t)$ could have been hypothetically selected by the platform if it strictly followed the consumer-provider agreement. These reference feeds are specific to each user $u\in[\s{U}]$ and time $t$. In this scenario, the platform would construct the feed based solely on the user's interests, without any subjective preferences influencing the content selection. Essentially, the only natural situation where the platform can filter content without introducing any subjective bias into the user's decision-making process and actions is by selecting feasible content that maximizes the user's benefit/reward. This approach ensures that the user's feed reflects their own preferences, which may align with the platform's benefits at times, but not necessarily always. Mathematically, the user's exclusive benefit is quantified by a personal reward function that encompasses only the components measuring the user's benefits. We formulate this objective rigorously, while elucidates the difference between the filtered and reference feeds.
\begin{definition}[Construction of reference feeds] Suppose that the platform's reward objective function can be written as following type
\begin{align*}
    \s{Rew}^{\s{F}}_t(\bX,\s{P}_{i,t-1}) \triangleq \s{Rew}_{t,\s{per}}(\bX,\s{P}_{i,t-1})+\s{Rew}_{t,\s{rev}}(\bX,\s{P}_{i,t-1})+\s{Rew}_{t,\s{self}}(\bX,\s{P}_{i,t-1}),
\end{align*}
where $\s{Rew}_{t,\s{per}}$ is the reward gained by those feeds which are personalized to the user, $\s{Rew}_{t,\s{rev}}$ is the revenue-related reward gained by advertisements, and $\s{Rew}_{t,\s{self}}$ predicts the reward associated with the information the platform would gain from platform ``selfish" aspects (e.g., running a social experiment on the user).
Without the loss of generality, assume that the first two types of rewards are consistent with the consumer-provider agreement, but the last one is not. Then, the reference feed could be the one that maximize the contribution of the first two types of rewards, namely, $$\s{Rew}^{\s{R}}_t(\bX,\s{P}_{i,t-1}) \triangleq \s{Rew}_{t,\s{per}}(\bX,\s{P}_{i,t-1})+\s{Rew}_{t,\s{rev}}(\bX,\s{P}_{i,t-1}).$$ 
\end{definition}

It should be emphasized here that the specific reference feed construction we described above is just one possible example; our results and algorithms only require that there is some fixed reference feed (per user).

 \paragraph{Auditor's generative modeling.}
The AF mechanism is not known and should not be disclosed to the auditor. Nonetheless, it should be clear that for the auditor to be able to inspect the SMP, something about the feeds generation process must be assumed. In this paper, we assume that from the auditor's point of view, the feeds are generated at random, and we denote the conditional law of the feed at time $t$ conditioned on history feeds $\s{h}_{t-1,u}$ by $\pr^{\s{F}}_{u,t}(\s{h}_{t-1,u}) \triangleq \pr(\bX^{\s{F}}_{u}(t)\vert\s{H}_{t-1,u}^{\s{F}}=\s{h}_{t-1,u})$, for user $u$. Later on, for the framework to be mathematical tractable, we will place additional assumptions on the family of distributions.

\paragraph{Time dependent counterfactual regulations.} Above, we have focused on the ``filtered vs. reference" feeds approach. We propose the following as an alternative. Let $\calS$ be a \emph{regulatory statement} that an inspector (or, perhaps, the platform itself) wish to test. For example, $\calS$ could be: ``\emph{The platform should produce similar feeds, in the course of a given time horizon $\s{T}$, for users who are identical except for property $\mathscr{P}$}", where $\mathscr{P}$ could be ethnicity, sexual orientation, gender, a combination of these factors, etc. Let $\calU_{\mathscr{P}}\subset\s{U}\times\s{U}$ be a subset of pairs of users that comply with $\mathscr{P}$. Then, for any pair of users $(i,j)\in\calU_{\mathscr{P}}$, the inspector's objective is to determine whether the platform's filtering algorithm cause user $i$'s and user $j$'s beliefs and actions to be significantly different. We formulate this objective rigorously in the next section. We mention here that a similar approach to the above was proposed recently in \cite{cen2020regulating}, assuming a time-independent static model. Our study first focuses on constructing a regulation procedure given the first usable form, filtered vs. reference feeds. However, we will later reveal that a regulation procedure for the second form, counterfactual regulations, could be constructed using two parallel procedures of the first form.

\paragraph{Hypothesis testing.} The auditor's goal is to determine whether the platform upholds the consumer-provider agreement, and by doing so, to moderate intense influence on the user's decision-making, which may be caused by observing filtered feed, compared to what would have been the user's decision-making under the reference feed. With the model introduced above, the auditor's task can be formulated as a hypothesis
testing problem with the following two hypotheses:
\begin{itemize}
    \item \textbf{\emph{The null hypothesis $\calH_0$}}: the auditor (or self-audit) decision is that the platform honors the consumer-provider agreement.
    \item \textbf{\emph{The alternative hypothesis $\calH_1$}}: the auditor (or self-audit) decision is to investigate the platform for a possible violation.
\end{itemize}
\noindent Accordingly, relying on a certain from of data, which we will specify in the sequel, the auditor's detection problem is to determine whether $\calH_0$ or $\calH_1$ is true. We need to specify what kind of ``test" is considered. Given a fixed risk $\delta\in(0,1)$, we expect the auditing procedure to find the true one with probability $1-\delta$, whichever it is. We call such a procedure $\delta$-correct. We consider the following notion of ``frugality", which we name \emph{batch setting}: the auditor specifies in advance the number of samples needed for the test, and announce its decision just after observing the data all at once, and the sample complexity of the test is the smallest sample size of a $\delta$-correct procedure. 

\paragraph{Auditor's data.} 
For $t\geq1$ the auditor observes the filtered and reference feeds $\{\bX^{\s{F}}_{i}(t),\bX^{\s{R}}_{i}(t)\}$, for all (or a subset of) users $u\in\s{U}$, and utilize these to test for regulation violations. There are two ways to access this data without invasions to privacy. First, under self-regulation (currently, almost all platform are entirely self-regulated \cite{klonick2017new}), the platform obviously has access to those feeds, and therefore, there are no privacy issues. The second option is to provide anonymized data to the auditor. Indeed, both the users identities and the meaning behind the features should/can be removed since they do not affect regulation enforcement. Note that de-anonymization is not a real concern here because the anonymized datasets will not be publicly shared anyhow. Moreover, since the auditor only requires the numerical features of the feeds, rather their semantic interpretation, de-anonymization would require unreasonable significant effort that the auditor is not willing to undertake. Thus, with carefully laid out but reasonable measures, the users data would remain private and anonymous. Finally, notice that in principle the filtered and reference feeds need not necessarily correspond to real users and could represent sufficiently representative sample of hypothetical users.

\subsection{Formalizing the auditor's goal}\label{subsec:regulatorgoalapp}
Let $\{\bX^{\s{F}}_{u}(t)\}_{t\geq1}$ and $\{\bX^{\s{R}}_{u}(t)\}_{t\geq1}$ denote the sequences of user $u$'s filtered an reference feeds evolved over time, respectively. As discussed above, the users implicitly form beliefs from their feeds. With enough evidence, the users gain confidence, and then take actions. Accordingly, the corresponding user $u$'s beliefs and actions are denoted by $\{\calB^{\s{F}}_{u,t},\s{br}^{\s{F}}_{u,t}\}_{t\geq1}$ and $\{\calB^{\s{R}}_{u,t},\s{br}^{\s{R}}_{u,t}\}_{t\geq1}$, implied by the filtered and reference feeds, respectively. 

\paragraph{Violation.} We now define the meaning of ``violation" from the auditor's perspective. Let $\s{T}\in\mathbb{N}$ denote the time horizon, which determines how far into the past the auditor scrutinizes the platform's behavior. Let $\s{d}(\cdot\|\cdot):\Omega\times\Omega\to\mathbb{R}_{\geq0}$ be a probability metric between two probability measures defined over $\Omega$. 
Let $\bar{\s{U}}\subseteq\s{U}$ be a certain subset of users (such representative subset of the entire set of users). Then, define the \emph{total action-variability metric} as follows:
\begin{align}
    \mathds{V}_{\s{action}}\triangleq \frac{1}{\s{T}\cdot|\bar{\s{U}}|}\sum_{i\in\bar{\s{U}}}\sum_{t=1}^{\s{T}}\max_{\s{h}_{t,i}}\s{d}\p{\s{br}^{\s{F}}_{i,t}(\s{h}_{t,i})\Big\|\s{br}^{\s{R}}_{i,t}(\s{h}_{t,i})}.
\end{align}
Similarly, define the \emph{total belief-variability metric} as,
\begin{align}
    \mathds{V}_{\s{belief}}\triangleq \frac{1}{\s{T}\cdot|\bar{\s{U}}|}\sum_{i\in\bar{\s{U}}}\sum_{t=1}^{\s{T}}\max_{\s{h}_{t,i}}\s{d}\p{\calB^{\s{F}}_{i,t}(\s{h}_{t,i})\Big\|\calB^{\s{R}}_{i,t}(\s{h}_{t,i})}.
\end{align}
Finally, recall that in Subsection~\ref{subsec:user-platform} we also proposed a statistical model for filtering. Accordingly, as we explain below, it is beneficial to define also the \emph{total filtering-variability metric}:
 \begin{align}
    \mathds{V}_{\s{filter}}\triangleq \frac{1}{\s{T}\cdot|\bar{\s{U}}|}\sum_{i\in\bar{\s{U}}}\sum_{t=1}^{\s{T}}\max_{\s{h}_{t-1,i}}\s{d}\p{\pr^{\s{F}}_{i,t}(\s{h}_{t-1,i})\Big\|\pr^{\s{R}}_{i,t}(\s{h}_{t-1,i})}.
\label{total-filtering-variability-metricapp}
\end{align}
It is useful to note that there is an analytical relationship between the above variabilities. Indeed, viewing $\s{br}^{\s{F}}_{i,t}$ as a result of a probabilistic kernel that is applied on the beliefs, and assuming that the metric $\s{d}$ satisfies the data processing inequality \cite{CoverBook},
it follows that $\mathds{V}_{\s{action}}\leq\mathds{V}_{\s{belief}}\leq\mathds{V}_{\s{filter}}$. Now, from the auditor's perspective, violation could mean that $\mathds{V}_{\s{action}}>\varepsilon>0$, for some $\varepsilon>0$ which governs the regulation strictness, i.e., higher values of $\varepsilon$ indicate greater strictness. Alternatively, violation can also be defined through the belief-variability, namely, $\mathds{V}_{\s{belief}}>\varepsilon>0$, for some $\varepsilon>0$. Accordingly, depending on the auditor's ambition, its testing/decision problem can be formulated as one of the following:
\begin{subequations}\label{eqn:HTbelief}
  \begin{align}
   &\calH_{0}:\mathds{V}_{\s{action}}\leq\varepsilon_1\hspace{0.03cm}\quad\s{vs.}\quad\calH_{1}:\mathds{V}_{\s{action}}\geq\varepsilon_2,\label{eqn:HTbelief1}\\
   &\calH'_{0}:\mathds{V}_{\s{belief}}\leq\varepsilon_1\;\quad\s{vs.}\quad\calH'_{1}:\mathds{V}_{\s{belief}}\geq\varepsilon_2,\label{eqn:HTbelief2}\\
   &\calH''_{0}:\mathds{V}_{\s{filter}}\leq\varepsilon_1\hspace{0.2cm}\quad\s{vs.}\quad\calH''_{1}:\mathds{V}_{\s{filter}}\geq\varepsilon_2,\label{eqn:HTbelief3app}
  \end{align}
\end{subequations}
where $\varepsilon_2>\varepsilon_1\geq0$. Devising successful statistical tests which solve \eqref{eqn:HTbelief1} (or, \eqref{eqn:HTbelief2}) with high probability, guarantee that whenever the auditor decision is $\calH_0$ (or, $\calH_0$'), then the platform honors the consumer-provider agreement, since the beliefs and actions are indistinguishable under the filtered and reference feeds. Conversely, rejecting $\calH_0$ (or, $\calH_0'$) with high confidence implies that AF causes significantly different learning outcomes. Note that by the data processing inequality, accepting $\calH_0''$ in \eqref{eqn:HTbelief3app} imply immediately that $\calH_0$ and $\calH_0'$ hold as well. Note that the general form of the hypothesis testing problems formulated in \eqref{eqn:HTbelief} reminiscent of the well-studied \emph{tolerant closeness testing} problem (see, e.g., \cite{daskalakis2018distribution,canonne2021price}). In this paper, we focus on the hypothesis test in \eqref{eqn:HTbelief3app}. 

\paragraph{Testing.} Solving \eqref{eqn:HTbelief3app} is mathematically intractable unless we place further assumptions on the family of distributions that generate the feeds. In this paper, we assume the following quasi-Markov homogeneous model. We divide the time horizon into batches, and assume that in each batch, the platform filtering process is modeled as a large probabilistic state machine. During these batches the platform collect new data to create new successive feeds. 
From the auditor's point of view, the platform is a rather sequentially-feeds generating system, making a probabilistic relationship of the current feed conditioned on the previous feeds, in time intervals. Under these circumstances, the auditor models problem \eqref{eqn:HTbelief3} as a quasi-Markov homogeneous model. 

Mathematically, let $\s{T}_{\s{total}}\in\mathbb{N}$ denote the time horizon, which determines how far into the past the auditor scrutinizes the platform's behavior. Assume we have $\s{B}\in\mathbb{N}$ batches each of length $\s{T}\in\mathbb{N}$, such that in batch $b\in[\s{B}]$ we have a time sampling sequence $b\cdot\s{T}<t_{0,b}<t_{1,b}<\dots<t_{\s{T},b}\leq (b+1)\cdot\s{T}$. In each batch, from the auditor's point of view, the piece of content $\f{x}_{\ell,u}^{\s{F}}(t_{i,b})$, at time $t_{i,b}$, for $\ell\in[\s{M}]$, is drawn from a first-order irreducible Markov chain, namely, $ \pr(\f{x}^{\s{F}}_{\ell,u}(t_{i,b})\vert\f{x}^{\s{F}}_{\ell,u}(t_{0,b}),\ldots,\f{x}^{\s{F}}_{\ell,u}(t_{i-1,b})) = \pr(\f{x}^{\s{F}}_{\ell,u}(t_{i,b})\vert\f{x}^{\s{F}}_{\ell,u}(t_{i-1,b}))$, and $\pr(\f{x}^{\s{F}}_{\ell,u}(t_{i,b})=s_2\vert\f{x}^{\s{F}}_{\ell,u}(t_{i-1,b})=s_1)\triangleq Q_{u,b}(s_1,s_2)$, for any two possible states $s_1,s_2\in\calX$. We denote the transition probability matrix by in batch $b\in[\s{B}]$ by $\s{Q}^{\f{F}}_{u,b}=\left[Q_{u,b}(s_1,s_2)\right]_{s_1,s_2 \in\calX}$. We assume further that the $\s{M}$ Markov trajectories are i.i.d. Note that over different intervals, indexed by $b$, the filtering process could be transformed into a new state machine subjected to a different transition probabilities. For example, this transformation may occur over time when new external data incur noticeable changes in the platform's reward. Thus, in the $b$th batch, the observed feeds are,
$$\underset{\text{Feed 1}}{\left\{\f{x}^{\s{F}}_{l,u}(t_{0,b})\right\}_{l=1}^{\s{M}}}, \underset{\text{Feed 2}}{\left\{\f{x}^{\s{F}}_{l,u}(t_{1,b})\right\}_{l=1}^{\s{M}}},
\ldots,
\underset{\text{Feed } \s{T}}{\left\{\f{x}^{\s{F}}_{l,u}(t_{\s{T},b})\right\}_{l=1}^{\s{M}}}.$$ 
The above discussion is relevant to the reference feeds generation process as well; in particular, we denote by $\s{P}^{\f{R}}_{u,n}\triangleq\left[P_{u,n}(s_1,s_2)\right]_{i,j \in\calX}$ the corresponding matrix transition probabilities.

From the auditor point of view, in terms of the reward-based platform filtering, a practical interpretation for the above modeling is as follows. At any interval $b$, the platform generates some updated Markovian transition-matrix that is subjected to an updated Markov chain by maximizing its reward function, i.e.,
\begin{equation*}
\begin{aligned}
\displaystyle \s{Q}^{\f{F}}_{u,b+1} = &\argmax_{\s{Q}}\quad \s{Rew}^{\s{F}}_t(\s{Q},\calP_{u,b})\\
& \textrm{ s.t.} \quad \forall j\in\calX, \quad\sum_{i\in\calX} \s{Q}_{i,j}=1, \\\end{aligned}
\end{equation*}
where $\calP_{u,b}$ captures the external data and inputs to the platform used for filtering, intended for user $u$, and was collected during the current time interval $(b\cdot \s{T},(b+1)\cdot \s{T}]$. Similarly, the reference feeds are generated by the same statistical process but by the reference-based rewards objective, i.e.,
\begin{equation*}
\begin{aligned}
\displaystyle \s{P}^{\f{R}}_{u,b+1}= &\argmax_{\s{P}}\quad \s{Rew}^{\s{R}}_t(\s{P},\calP_{u,b})\\
& \textrm{ s.t.} \quad \forall j\in\calX, \quad\sum_{i\in\calX}\s{P}_{i, j}=1. \\\end{aligned}
\end{equation*}

\vskip 0.2in

\acks{This work was supported by the ISRAEL SCIENCE FOUNDATION (grant No. 1734/21).}

\bibliographystyle{alpha}  
\bibliography{sample}  



\end{document}